\newtheorem{theorem}{Theorem}
\newtheorem{lemma}{Lemma}
\newtheorem{definition}{Definition}
\newcommand{\cA}{\mathcal{A}}
\newcommand{\cB}{\mathcal{B}}
\newcommand{\cE}{\mathcal{E}}
\newcommand{\cF}{\mathcal{F}}
\newcommand{\cH}{\mathcal{H}}
\newcommand{\cN}{\mathcal{N}}
\newcommand{\cK}{\mathcal{K}}
\newcommand{\Beta}{\mathrm{Beta}}
\newcommand{\abs}[1]{\left| #1 \right|}
\newcommand{\bOne}[1]{\mathds{1} \! \left\{#1\right\}}
\newcommand{\bracket}[1]{\left(#1\right)}
\newcommand{\set}[1]{\left\{ #1 \right\}}
\newcommand{\EE}[1]{\mathbb{E} \left[#1\right]}
\newcommand{\PP}[1]{\mathbb{P} \left(#1\right)}
\newif\ifsup\suptrue
\DeclareMathOperator*{\argmax}{argmax}
\title{Thompson Sampling for Bandit Learning in Matching Markets}
\author{
Fang Kong$^1$
\and
Junming Yin$^2$\And
Shuai Li$^{1}$\thanks{Corresponding author. The full version is available at \url{http://arxiv.org/abs/2204.12048}.
}
\affiliations
$^1$John Hopcroft Center for Computer Science, Shanghai Jiao Tong University\\
$^2$Tepper School of Business, Carnegie Mellon University
\emails
\{fangkong, shuaili8\}@sjtu.edu.cn,
junmingy@cmu.edu
}
\begin{document}

\maketitle
\begin{abstract}
The problem of two-sided matching markets has a wide range of real-world applications and has been extensively studied in the literature. A line of recent works have focused on the problem setting where the preferences of one-side market participants are unknown \emph{a priori} and are learned by iteratively interacting with the other side of participants. All these works are based on explore-then-commit (ETC) and upper confidence bound (UCB) algorithms, two common strategies in multi-armed bandits (MAB). Thompson sampling (TS) is another popular approach, which attracts lots of attention due to its easier implementation and better empirical performances. In many problems, even when UCB and ETC-type algorithms have already been analyzed, researchers are still trying to study TS for its benefits. However, the convergence analysis of TS is much more challenging and remains open in many problem settings. In this paper, we provide the first regret analysis for TS in the new setting of iterative matching markets. Extensive experiments demonstrate the practical advantages of the TS-type algorithm over the ETC and UCB-type baselines. 


\end{abstract}


\section{Introduction}
\label{sec:intro}
The model of matching markets has been studied for several decades \cite  {gale1962college,roth1997turnaround,haeringer2011decentralized}. It has a wide range of applications including labor employment \cite{roth1984evolution}, house allocation \cite{abdulkadirouglu1999house}, and college admission \cite{epple2006admission,fu2014equilibrium}. 
Typically there are two sides of players, such as employers and workers in the labor market, and each player of one side has a preference ranking over players on the other side.  
Stability is a key concept in matching markets, which ensures market participants have no incentive to abandon the current partner \cite{gale1962college,roth1992two}. 
Many researchers study how to find a stable matching in the markets \cite{gale1962college,roth1992two}. 
However, most of them assume the full preferences of players are known \emph{a priori}, which is not realistic in many real-world applications.  
For example, the demand-side players in online matching markets (such as employers in UpWork or Mechanical Turk) are likely to be uncertain about the qualities of supply-side players (such as workers).
Many such online platforms usually have matching processes happen iteratively, which allows the participating players to adaptively learn their unknown preferences \cite{liu2020competing,liu2020bandit}.




Multi-armed bandits (MAB) is a common approach to modeling this type of learning process \cite{auer2002finite,lattimore2020bandit}. 
The most basic framework considers a single player and $K$ arms, where the player does not have prior knowledge over arms and will learn it through iteratively collected rewards. The objective of the player is to maximize the cumulative reward over a specified horizon, or equivalently minimize the cumulative regret, which is defined as the difference between the cumulative reward of the optimal arm and that of the chosen arms. 
To achieve this goal, the player needs to make a trade-off between exploration and exploitation: the former tries arms that have not been observed enough times to get potential higher rewards, and the latter focuses on arms with the highest observed rewards so far to maintain high profits. 

There are many types of strategies to balance such trade-offs with theoretical guarantees, among which explore-then-commit (ETC), upper confidence bound (UCB), and Thompson sampling (TS) are widely adopted in the literature \cite{lattimore2020bandit}. 
The TS algorithm was introduced in the 1930s \cite{thompson1933likelihood} but has not been theoretically proven until the 2010s \cite{kaufmann2012thompson,agrawal2012analysis,agrawal2013further}. 
Algorithms of this type form a competitive family in the bandit area because of many advantages such as easier implementations and better practical performances. Though ETC and UCB-type algorithms have solved many bandit problems, there are still accompanying works trying to analyze TS \cite{cheung2019thompson,kong2021hardness}.



\citeauthor{das2005two} \shortcite{das2005two} first introduce the bandit learning problem in two-sided matching markets and show the empirical performances of algorithms via simulation tests. 
\citeauthor{liu2020competing} \shortcite{liu2020competing} later study a refinement of the problem and give the first algorithm with theoretical guarantees. They propose both ETC and UCB-type algorithms with upper bounds on the \textit{stable regret}, which is defined as the difference between the cumulative reward of a stable matching\footnote{Different stable matchings give different kinds of stable regrets.} and the cumulative reward collected. 
Both algorithms adopt a central platform to collect players' preferences and assign allocations to participants. 
Since such platforms do not always exist in real-world applications, the following works extend to the decentralized setting \cite{liu2020bandit,sankararaman2021dominate,basu21beyond}. All these works still focus on ETC and UCB-type algorithms and do not consider TS. 

In a standard TS analysis, one needs to bound the inaccurate estimations for the optimal arms due to the randomness caused by the posterior samples, which is easily controlled because eventually the optimal arms can be observed enough times. 
However, this property may not hold anymore in matching markets since the observations are not simply decided by the number of selections and can be blocked if the selected arm rejects this player. 
Such a difficulty does not exist in UCB-type algorithms since it mainly needs to bound inaccuracies for sub-optimal arms, and ETC-type algorithms can control the blockings easily in the design.

In this work, we present the first TS-type algorithm for decentralized matching markets. 
For the stable regret analysis, though the observations might be blocked, we prove that its success probabilities could be bounded by separately analyzing the influence of other players in a fine division of events with a different number of selections on the optimal stable arms.
The result guarantees the regret of TS with an order of $O(\log^2 T/\Delta^2)$, where $T$ is the horizon and $\Delta$ is the minimum preference gap. 
A series of experiments are conducted to show that the practical performances of the TS-type algorithm are better than other baselines.

\paragraph{Related Work}
\label{sec:related}

The MAB problem has been studied for many decades, which captures the learning process of a single player in an unknown environment.
The study on stochastic MAB is an important line of works where the reward of arms are drawn from fixed distributions. $\epsilon$-greedy, ETC, UCB and TS are all classical algorithms for this setting and are widely followed in the literature \cite{lattimore2020bandit}. 

Motivated by real applications including cognitive radio where multiple players compete and cooperate in the unknown environment, the problem extends to multi-player MAB. 
These works \cite{liu2010distributed,rosenski2016multi,bistritz2018distributed} mainly consider the case where a player receives no reward if it conflicts with others by selecting the same arm and measure algorithms performances by the collective cumulative regret of all players.


The model of combinatorial dueling bandits \cite{chen2020combinatorial} can also be regarded as a multi-player problem, where the preferences of players are defined on pairs of arms. 
They study a pure exploration problem for this model with the objective to find the best matching after a specified period.

The two-sided matching market problem is different from previous multi-player MAB by considering arms' preferences \cite{gale1962college,roth1997turnaround}. In this problem, not only do players have arbitrary preferences over arms, arms also have arbitrary preferences over players. When multiple players collide at the same arm, the player preferred most by this arm will receive the corresponding reward while others receive no feedback.


The bandit learning problem in two-sided matching markets was first introduced by \citeauthor{das2005two} \shortcite{das2005two}. They assume the preferences of both sides are unknown and study the empirical performances of $\epsilon$-greedy in a special market where all participants on one side share the same preferences. 
Later, \citeauthor{liu2020competing} \shortcite{liu2020competing} study a variant of the problem by considering one-side unknown preferences. 
They propose both ETC and UCB-type algorithms and show the convergence analysis of the stable regret. 
Both algorithms adopt Gale-Shapley (GS) \cite{gale1962college} to assign allocations for players. 
However, in real applications, players usually prefer to independently make decisions. 
\citeauthor{liu2020competing} \shortcite{liu2020competing} then propose Decentralized ETC in the decentralized setting, which lets players to explore arms for a fixed number of rounds. However, the exploration budget $H$ needs to depend on the preference gap, which is usually not known beforehand. 
 \citeauthor{sankararaman2021dominate} \shortcite{sankararaman2021dominate} and \citeauthor{basu21beyond} \shortcite{basu21beyond} successively study the decentralized setting to remove this assumption but under special assumptions to guarantee a unique stable matching.
For general markets, \citeauthor{basu21beyond} \shortcite{basu21beyond} propose the phasedETC algorithm and \citeauthor{liu2020bandit} \shortcite{liu2020bandit} propose a UCB-type algorithm to avoid conflicts among players and minimize the stable regret. 
\citeauthor{dai2020learning} \shortcite{dai2020learning} also study to learn players' unknown preferences in a decentralized matching market. However, their objective is to estimate the unknown preferences using a statistical model but involves no cumulative regret.
To the best of our knowledge, we are the first to study the TS-type algorithm for two-sided matching markets. 



\section{Setting}
\label{sec:setting}

There are $N$ players and $K$ arms in the market. The player set is denoted by $\cN = \set{p_1,\ldots,p_N}$ and the arm set is denoted by $\cK = \set{a_1,\ldots,a_K}$. To ensure each player can be matched with an arm, we assume $N \le K$ as \cite{liu2020competing,liu2020bandit,basu21beyond,sankararaman2021dominate}. 

For each player $p_i$, its preference for arm $a_j$ is quantified by an unknown value $\mu_{i,j} \in [0,1]$. For two different arms $a_j$ and $a_{j'}$, $\mu_{i,j}>\mu_{i,j'}$ implies that player $p_i$ \textit{truly} prefers arm $a_j$ to $a_{j'}$. 
Similarly, each arm $a_j$ has preferences over players.  
Let $\pi_{j,i}$ to represent the preference value of arm $a_j$ for player $p_i$. For two different players $p_i$ and $p_{i'}$, $\pi_{j,i} > \pi_{j,i'}$ implies that arm $a_j$ prefers player $p_{i}$ to $p_{i'}$.
The ranking for each arm $a_j$'s preferences $(\pi_{j,i})_{i \in [N]}$ is assumed to be known since it usually can be estimated by some known utilities such as the payments given by employers in the labor market.

At each round $t=1,2,\ldots$, each player $p_i \in \cN$ attempts to pull an arm $A_{i}(t) \in \cK$. Let $A(t)=(A_i(t))_{i \in [N]}$. 
When multiple players attempt to pull the same arm, there will be a conflict and only the player preferred most by this arm is accepted. 
If a player $p_i$ wins the conflict, it will receive a random reward $X_{i,A_{i}(t)}(t) \in [0,1]$ with expectation $\mu_{i,A_{i}(t)}$. Other players $p_{i'}$ who fail the conflict are unmatched in this round and receive $X_{i',A_{i'}(t)}(t)=0$.
Following the observations in the scenario of the labor market and college admission where the arm side (e.g., workers or colleges) usually announces the successfully matched players (e.g., employers or students), we assume the successfully matched player for each arm is public at the end of the round as in previous work \cite{liu2020bandit}. 

To measure the status of the market, stable matching \cite{gale1962college} is introduced to define an equilibria. A stable matching is a one-to-one mapping from players to arms without a pair of player and arm such that they both prefer being matched with each other over the current partner.  
With the true preference rankings of players and arms, there may be more than one stable matching. Denote $m_i$ as player $p_i$'s least favorite arm among matched arms from all stable matchings. 
The objective is to find a stable matching between players and arms and minimize the cumulative stable regret for each player $p_i$ \cite{liu2020competing,liu2020bandit,basu21beyond,sankararaman2021dominate}, which is defined as 
\begin{align}
	R_i(T) = T\mu_{i,m_i} -  \EE{ \sum_{t=1}^T  X_{i,A_{i}(t)}(t) }    \,,
\end{align}
where the expectation is taken over the randomness in the reward payoffs and the algorithm.

\paragraph{Motivating example} 
Common real-world applications of above setting include online labor market Upwork and online crowd-sourcing platform Amazon Mechanical Turk as shown in \citeauthor{liu2020competing} \shortcite{liu2020competing}. 
The employers in these platforms have numerous similar tasks to be delegated, and the workers can only work on one task at a time. 
The reward of the employer when being matched with a worker corresponds to how well the task was done. Employers usually have unknown preferences over workers since they do not know their real skill levels, while workers’ preferences over employers can be certain based on the payment or their familiarity with tasks. Thus the employers can be modeled as players and workers can be modeled as arms in our setting. The employers would provide repeated offers to match with workers through these numerous tasks and learn their uncertain preferences during this process. When faced with multiple offers at a time, the worker would choose the most preferred one.


\section{Algorithm}
\label{sec:alg}

In this section, we introduce our TS-type algorithm, referred to as conflict-avoiding TS (CA-TS, Algorithm \ref{alg:CA-TS}), for two-sided matching markets. 
Some design ideas are motivated by \citeauthor{liu2020bandit} \shortcite{liu2020bandit} and the main difference is that CA-TS uses parameters sampled from posterior distributions as estimations for preferences to select arms.
However, this raises new analysis difficulties since CA-TS additionally requires accurate estimations for `optimal arms' to ensure a stable matching, which will be made clearer later in Section \ref{sec:analysis}.

\begin{algorithm}[thb!]
    \caption{Conflict-Avoiding TS (CA-TS)}\label{alg:CA-TS}
    \begin{algorithmic}[1]
    \STATE Input: Player set $\cN$, arm set $\cK$, parameter $\lambda \in (0,1)$ \label{alg:cats:input}; 
    \STATE Initialize: $\forall i \in[N], j \in [K], a_{i,j}=b_{i,j}=1$ \label{alg:cats:initial}
    \FOR{$t=1,2,\cdots$}
        \FOR{$p_i \in \cN$}
        \STATE $\forall a_j$, sample $\theta_{i,j}(t)\sim\Beta(a_{i,j},b_{i,j})$ \label{alg:cats:sample}
            \STATE Independently draw $D_{i}(t)\sim\mathrm{Bernoulli}(\lambda)$ \label{alg:cats:draw}
            \IF{$D_{i}(t)==0$} \label{alg:cats:select:start}
                \STATE Construct plausible set \label{alg:cats:plausible}\\
                $$ S_{i}(t) := \set{j: \pi_{j,i}\ge \pi_{j,i'} \text{ where } \bar{A}_{i'}(t-1)=j  }$$
                \STATE Pull $A_{i}(t) \in \argmax_{j \in S_{i}(t)}\theta_{i,j}(t)$ \label{alg:cats:select:end}
            \ELSE  
                \STATE Pull $A_{i}(t) = A_{i}(t-1)$ \label{alg:cats:delay}
            \ENDIF
            \IF{$p_i$ wins conflict} \label{alg:cats:update:start}
                \STATE $\bar{A}_{i}(t) = A_{i}(t)$ \label{alg:cats:mark}
                \STATE $Y_i(t) \sim \mathrm{Bernoulli}\bracket{X_{i,A_{i}(t)}(t)}$
                \STATE Update $a_{i,A_{i}(t)} = a_{i,A_{i}(t)} +Y_{i}(t)$\\$b_{i,A_{i}(t)} = b_{i,A_{i}(t)}+(1-Y_{i}(t))$
            \ELSE
                \STATE $\bar{A}_{i}(t) = -1$
            \ENDIF \label{alg:cats:update:end}
        \ENDFOR
    \ENDFOR
    \end{algorithmic}
\end{algorithm}

The CA-TS algorithm takes the player set $\cN$ and arm set $\cK$ as input (line \ref{alg:cats:input}). 
For each player $p_i$ and arm $a_j$, the algorithm maintains a Beta distribution $\Beta(a_{i,j},b_{i,j})$ for the preference value.
In the beginning, this distribution is initialized to $\Beta(1,1)$, the uniform distribution on $[0,1]$ (line \ref{alg:cats:initial}). 
It will be later updated based on observed feedback and tend to concentrate on the real value $\mu_{i,j}$. 
In round $t$, the algorithm samples an index $\theta_{i,j}(t)$ from $\Beta(a_{i,j},b_{i,j})$ to represent the current estimation (line \ref{alg:cats:sample}).

One may consider letting each player $p_i$ follow independent TS strategies, namely choose $A_i(t) \in \argmax_{j \in [K]}\theta_{i,j}(t)$. 
A simple example shows that frequent conflicts would happen under this strategy. 
Suppose there are $2$ players and $2$ arms, and $\mu_{i,1} = \max_{j\in[K]}\mu_{i,j}$ for each $p_i$. 
Then at each round $t$, $\theta_{i,1}(t) = \max_{j\in[K]}\theta_{i,j}(t)$ holds for each $p_i$ with at least constant probability. 
The reason is that if enough observations are collected, then the sampled indices approach the real preference value and $\theta_{i,1}(t)$ would be the largest; otherwise, the Beta distributions tend to be uniform and $\theta_{i,1}(t)$ would still be the largest with constant probability. 
Thus both players attempt to pull $a_1$ with constant probability at each round.
However, $a_1$ only accepts one player and the other one will be rejected and receive no feedback. The stable regret of the latter player is thus of order $O(T)$.

To avoid frequent conflicts in the above case, we construct a plausible set for each player to exclude arms that may reject it (line \ref{alg:cats:plausible}). 
Recall that the successfully matched players and the preference rankings of arms are known. 
If arm $a_j$ accepts a player $p_{i'}$ but prefers $p_i$ more, $p_i$ will not be rejected by $a_j$ when the same set of players attempt to pull $a_j$. Following this observation, the plausible set for $p_i$ at time $t$ is constructed to contain arms who accept a player less preferred than $p_i$ at $t-1$ (line \ref{alg:cats:plausible}). Player $p_i$ then selects the arm with the highest index in the plausible set (line \ref{alg:cats:select:end}).
Here we use $\bar{A}_{i}(t)$ to represent the arm successfully pulled by $p_{i}$ at $t$. If $p_i$ fails the conflict, then $\bar{A}_{i}(t)=-1$. 

However, players can still simultaneously pull same arms. 
Consider an example with $2$ players and $2$ arms. Both players have $\mu_{i,1} = \argmax_{j\in[K]}\mu_{i,j}$, arm $a_1$ prefers $p_1$ to $p_2$ and arm $a_2$ prefers $p_2$ to $p_1$. It is possible that both players attempt to pull $a_1$ at $t=1$. Then $p_2$ is rejected and its plausible set only contains $a_2$ at $t=2$. For player $p_1$, its plausible set is still $\set{a_1,a_2}$, and its Beta distribution for arm $a_2$ is still uniform. Thus with constant probability, $\theta_{1,2}(t)>\theta_{1,1}(t)$ and both players attempt to pull arm $a_2$ at time $t=2$. 
In this case, player $p_1$ will then be rejected. The same analysis shows that both players can always pull the same arm and be rejected in turn. Thus the stable regrets of these two are both of the order $O(T)$.

A random delay mechanism is further introduced to keep the effectiveness of the plausible set and avoid such rejections. 
The intuition is that if all players except $p_i$ follow the last-round choice, then $p_i$ will not be rejected by selecting arms in the plausible set. 
To be specific, each player first draws a Bernoulli random variable $D_i(t)$ with expectation $\lambda$ (line \ref{alg:cats:draw}), which is a hyper-parameter. 
If $D_i(t)=0$, $p_i$ still attempts the arm with the largest index in the plausible set (line \ref{alg:cats:select:start}-\ref{alg:cats:select:end}); otherwise, it follows the last-round choice (line \ref{alg:cats:delay}). 


When all players decide which arm to pull in this round, arms will determine which player to accept according to their rankings. If player $p_i$ wins the conflict and successfully pulls arm $A_i(t)$, the algorithm marks $\bar{A}_i(t)$ as $A_i(t)$ and updates the corresponding Beta distribution (line \ref{alg:cats:update:start}-\ref{alg:cats:update:end}).


\section{Theoretical Results}
\label{sec:analysis}

In this section, we provide the theoretical result of CA-TS. 
The corresponding gaps are defined to measure the performance of the algorithm. 

\begin{definition}{(Gaps)}
For each player $p_i \in \cN$, denote $\Delta_{i,\max} = \mu_{i,m_i}$
as the maximum stable regret that player $p_i$ needs to pay in unstable matchings. 
For any pair of arms $a_j$ and $a_{j'}$, define
\begin{align}
    \Delta_{i,j,j'} = \abs{\mu_{i,j} - \mu_{i,j'}}
\end{align}
as the reward difference between arm $a_j$ and $a_{j'}$ for player $p_i$. Let $\Delta = \min_{i,j,j':\mu_{i,j}\neq \mu_{i,j'} } \abs{\mu_{i,j}-\mu_{i,j'}}$.

\end{definition}

\begin{theorem}\label{thm:bound}
Let $\rho = \lambda^{N-1}(1-\lambda)$. Following Algorithm \ref{alg:CA-TS}, the stable regret of each player $p_i$ satisfies

\begin{align}
R_i(T) 
\le&   \left\{ \frac{2N^5K^2\log T}{\rho^{N^4}} \bracket{ 4+\frac{6\log T}{\rho (\Delta-2\varepsilon)^2} + \frac{C}{\rho\varepsilon^6} }  \right.\notag \\
&\left.~~ + 6 + \frac{6N^4}{\rho^{N^4}} \right\}\cdot \Delta_{i,\max} \label{bound:ful}\\
=&O\bracket{ \frac{N^5K^2 \log^2 T}{\rho^{N^4}\Delta^2}\cdot \Delta_{i,\max} } \label{bound:order}\,,
\end{align}
for any $\varepsilon$ such that $\Delta - 2\varepsilon >0$, where $C$ is a universal constant.
\end{theorem}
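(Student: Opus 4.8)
The plan is to control the number of rounds in which $p_i$ is matched to an arm worse than $m_i$ (or loses its conflict), since each such round costs at most $\Delta_{i,\max}$; it then suffices to bound $\mathbb{E}\big[\sum_{t=1}^{T}\mathbb{1}\{\mu_{i,A_i(t)}<\mu_{i,m_i}\ \text{or}\ \bar A_i(t)=-1\}\big]$ by the right-hand side of \eqref{bound:ful} divided by $\Delta_{i,\max}$. I would attribute each such bad round to one of two sources: (i) some \emph{active} player (one with $D_{i'}(t)=0$) has just drawn a posterior sample $\theta_{i',\cdot}$ whose induced order on the relevant arms differs from the true order by more than $\varepsilon$ --- an \emph{inaccurate} round for $p_{i'}$; or (ii) every active player has accurate samples throughout a window of $W=O(N^4)$ consecutive rounds, yet the realized delay variables never produce the ``one prescribed active player at a time'' pattern that the plausible-set dynamics need in order to settle onto a stable matching.

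For source (ii) I would first isolate a deterministic convergence claim: if over $W$ consecutive rounds every active player holds accurate samples and, in each round, exactly one prescribed player is active while all others repeat their previous choices, then the plausible-set rule (line~\ref{alg:cats:plausible}) drives the matching to a stable matching in which each $p_{i'}$ holds an arm at least as good as $m_{i'}$. The argument mirrors Gale--Shapley: the arms' known priorities together with $S_{i'}(t)$ guarantee that the highest-priority currently-unsatisfied player obtains its best available arm without being rejected, and this improvement cascades, terminating after at most $O(N^2)$ adjustments; allowing the cascade to be re-triggered $O(N^2)$ times as samples are refined accounts for $W=O(N^4)$. A length-$W$ window exhibits the required delay pattern with probability at least $\rho^{W}$ with $\rho=\lambda^{N-1}(1-\lambda)$, so the expected number of windows one waits through is $O(\rho^{-W})=O(\rho^{-N^4})$; multiplied by the per-arm observation budget drawn from source (i) this yields the $\rho^{-N^4}\log T$ prefactor, while the additive $6+6N^4\rho^{-N^4}$ absorbs the initial warm-up.

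Source (i) is the crux and the genuine departure from single-agent Thompson sampling. For a sub-optimal arm $a_j$ of $p_i$ (with $\mu_{i,j}<\mu_{i,m_i}$) the textbook argument applies: once $p_i$ has observed $a_j$ about $\tfrac{6\log T}{(\Delta-2\varepsilon)^2}$ times, Beta concentration forces $\theta_{i,j}(t)<\mu_{i,m_i}-\varepsilon$ except with probability $T^{-1}$, so $a_j$ is pulled only $O(\log T/\Delta^2)$ times overall. The delicate case is the optimal stable arms $a_j$ with $\mu_{i,j}\ge\mu_{i,m_i}$, whose samples must not be grossly \emph{under}-estimated; this is automatic in ordinary bandits because the best arm is played $\Omega(t)$ times, but here an optimal arm may be \emph{blocked} by a higher-priority player and hence never observed. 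I would resolve this by a fine partition on the number of rounds in which $p_i$ has \emph{won} --- not merely attempted --- arm $a_j$: conditioned on this count lying below the accuracy threshold, the random-delay mechanism gives $p_i$ probability at least $\rho$ in each round of being the unique active player, in which case $a_j$ lies in its plausible set exactly when $a_j$ would not reject it, so $p_i$ pulls and wins $a_j$ and increments the count. Hence the count reaches its threshold within $O(\rho^{-1}\log T)$ rounds except with probability $T^{-1}$; a union bound over the $\le N$ players, $\le K^2$ arm pairs, $\le T$ rounds and the $\le N^4$-round windows produces the $N^5K^2$ factor. The remaining term $C/(\rho\varepsilon^6)$ is the standard Thompson-sampling anti-concentration estimate: even a well-sampled optimal arm's Beta draw occasionally falls below $\mu_{i,j}-\varepsilon$, and bounding the expected number of such rounds --- equivalently, the expectation of the reciprocal of the probability that the draw clears the threshold --- contributes this $\varepsilon$-dependent universal constant.

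The main obstacle is precisely this blocked-observation phenomenon: reconciling ``an optimal arm of $p_i$ may be unobservable for long stretches'' with ``accurate optimal-arm samples are needed to keep $p_i$ on its stable partner'' is what forces the delay-plus-plausible-set gadget together with the case analysis on win-counts, and it is responsible for the $\rho^{-\mathrm{poly}(N)}$ blow-up separating this bound from the clean $O(\log T/\Delta^2)$ of single-agent TS.
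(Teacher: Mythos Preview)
Your high-level architecture matches the paper's: a window decomposition into (a) rounds where some player's sampled argmax within its plausible set is wrong (the paper's event $\neg\cE_t$, where $\cE_t=\bigcap_i\{\argmax_{j\in S_i(t)}\theta_{i,j}(t)=\argmax_{j\in S_i(t)}\mu_{i,j}\}$), and (b) windows of length $h_t=O(N^4\log T/\rho^{N^4})$ on which $\cE_t$ holds throughout but the matching has not settled, the latter controlled by the $(1-\rho^{N^4})^{\lfloor h_t/N^4\rfloor}$ estimate from \cite{liu2020bandit}. The second term in your source~(i) (sub-optimal arm sampled high) and the $C/(\rho\varepsilon^6)$ anti-concentration term are also correctly identified.

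The gap is in your argument for why the \emph{better} arm $a_{j'}$ is observed with probability at least $\rho$. You write that when $p_i$ is the unique active player, ``$a_{j'}$ lies in its plausible set exactly when $a_{j'}$ would not reject it, so $p_i$ pulls and wins $a_{j'}$.'' Neither clause is justified. First, membership $j'\in S_i(t)$ is determined by who held $a_{j'}$ at round $t-1$, so the delay coins at round $t$ give no control over it; if a higher-priority player held $a_{j'}$ at $t-1$, then $j'\notin S_i(t)$ regardless. Second, even when $j'\in S_i(t)$, player $p_i$ selects $\argmax_{k\in S_i(t)}\theta_{i,k}(t)$, and the very problem is that $\theta_{i,j'}(t)$ is under-estimated --- so $p_i$ will typically \emph{not} select $a_{j'}$. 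Your conclusion that the win-count reaches its threshold in $O(\rho^{-1}\log T)$ rounds is therefore unsupported (and indeed no $\log T$ is needed for this piece; the paper's bound here is the constant $C/(\rho\varepsilon^6)$).

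The paper resolves this by a different conditioning. It only counts rounds where $\neg\cE_t$ occurs via a specific triple $(i,j,j')$ satisfying $j=\argmax_{k\in S_i(t)}\theta_{i,k}(t)$ and $j'\in S_i(t)$ with $\mu_{i,j'}>\mu_{i,j}$ --- so $j'\in S_i(t)$ is part of the event being bounded, not something to prove. On such a round, once $|\theta_{i,j}(t)-\mu_{i,j}|\le\Delta_{i,j,j'}-2\varepsilon$ (event $\neg\cB_{i,j,j'}(t)$), one checks that \emph{had} $\theta_{i,j'}(t)$ been within $\varepsilon$ of $\mu_{i,j'}$ it would beat every $\theta_{i,k}(t)$ in $S_i(t)$; combined with $j'\in S_i(t)$ and the $\lambda^{N-1}(1-\lambda)$ delay pattern, this hypothetical accurate draw yields $\bar A_i(t)=j'$ with probability $\ge\rho$. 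Thus each such bad round is coupled to a $\rho$-chance of an observation on $a_{j'}$, \emph{conditional on an accurate draw of $\theta_{i,j'}$}. The fine partition is on a counter $N'_{i,j'}$ that increments only at these coupled successes, and between increments the number of bad rounds is bounded by a time-varying geometric with success probability $\inf_t\mathbb{P}(|\theta_{i,j'}(t)-\mu_{i,j'}|\le\varepsilon\mid\cH_t)$; summing the reciprocals over the counter values via the Beta anti-concentration lemma gives $C/(\rho\varepsilon^6)$. The key idea you are missing is that observability of $a_{j'}$ need not be established unconditionally --- it is enough that $a_{j'}$ is observable \emph{on the bad rounds themselves}, and the event defining those rounds already places $a_{j'}$ in the plausible set.
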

Due to the space limit, we provide
\ifsup
a proof sketch in Appendix \ref{sec:prooksketch} and the full proof in Appendix \ref{sec:proof:dcts}. 
\else
the proof sketch as well as the full proof in the Appendix. 
\fi
In addition to Beta priors shown in Algorithm \ref{alg:CA-TS}, we also analyze the CA-TS algorithm with Gaussian priors for $1$-subgaussian rewards, which achieves the same order of regret upper bound as Theorem \ref{thm:bound}. For completeness, we provide the full algorithm and analysis in
\ifsup
Appendix \ref{sec:CA-TS:gaussian}. 
\else
the Appendix.
\fi

\subsection{Discussions}\label{sec:discuss}
\paragraph{Hardness}

The TS-type algorithm faces new challenge for analysis in the setting of matching markets. 
Note in this setting, once a player wrongly estimate arms, both this player and others in the market could suffer non-negative stable regret.
In the following, we take a player $p_i$ and two arms $a_j,a_{j'}$ with $\mu_{i,j'}>\mu_{i,j}$ for further analysis. 
Specifically, we need to bound the number of times when $p_i$ incorrectly attempts $a_j$ instead of $a_{j'}$. 
If the attempt is due to inaccurate estimations of $\mu_{i,j}$, then it can be easily guaranteed since with more selections, $\mu_{i,j}$ would be finally observed enough times and estimated well. However, if $\mu_{i,j}$ has already been estimated accurately, the analysis gets more complicated. 



For this important part, we first investigate what properties of $\theta_{i,j'}$ are implied by the event. We show that the sample $\theta_{i,j'}$ must be inaccurate and once an accurate sample of it is drawn, $p_i$ will select $a_{j'}$. 
In standard analysis of MAB, such cases can be guaranteed since when $\theta_{i,j'}$ is not accurate, the variance of the posterior distribution would push it to generate a larger sample in some round and the observations can thus be obtained. 
But in matching markets, even when an accurate sample for $\mu_{i,j'}$ is drawn, the observation may still be unavailable. 
To deal with this case, at each time with a good sample of $\mu_{i,j'}$, we analyze the influence of all other players and compute the exact probability of obtaining an observation. 
Based on this probability, we construct a new counter to estimate the number of observations on $\mu_{i,j'}$. 
The total horizon can then have a fine division of slices based on this counter and within each slice, the posterior distribution of $\mu_{i,j'}$ has some common properties and we are able to bound the number of rounds to wait for a good event of being matched with arm $a_{j'}$.

Note this difficulty does not need to be dealt with by UCB and ETC. In UCB, due to the monotonicity between UCB index and real parameter, inaccurate estimations of $\mu_{i,j'}$ would not contribute to the regret, while ETC forces players to collect enough observations on every arm without considering other participants' influence.

\paragraph{Regret bound}
Our CA-TS is the first TS-type algorithm for general decentralized matching markets. 
Two comparable algorithms in the same setting with regret guarantees are CA-UCB \cite{liu2020bandit} and PhasedETC \cite{basu21beyond}. 
The former has the same main order of regret upper bound as ours. The latter, though its upper bound has better dependence on $T$ (of order $O(\log^{1+\varepsilon}T+2^{(1/\Delta^2)^{1/\varepsilon}})$, $\varepsilon>0$ is a hyper-parameter), suffers from the problem of cold-start and only works for a huge horizon $T=\Omega\bracket{\exp\bracket{N/\Delta^2}}$.
Compared with CA-UCB, our regret upper bound has an additional constant term $1/\varepsilon^6$. This term comes from the special nature of TS as discussed above and also appears in many other TS-based works \cite{agrawal2013further,wang2018thompson,perrault2020statistical,kong2021hardness}. 
As stated in the theorem, if we let $\varepsilon$ take value of $\Delta/3$, then this term would become $729/\Delta^6$, which is a constant relative to $\log T$. It is not known whether such a term is unavoidable and we would leave this as future work.
It is worth noting that the additional constant term does not imply bad practical performances. As shown in following Section \ref{sec:exp}, our TS-type algorithm performs better than these baselines.

\paragraph{Player-pessimal stable regret} 
In this paper, we study the player-pessimal stable regret which is defined with respect to players' least favorite stable matching. 
Except for basic ETC, the analyses of all existing works also focus on this type of stable regret \cite{liu2020competing,liu2020bandit,sankararaman2021dominate,basu21beyond}.
In the market shown in \cite[Example 6]{liu2020competing}, both UCB and TS cannot achieve sublinear player-optimal stable regret, compared with players' favorite stable matching. This is because once a player mistakenly ranks two arms, other players’ behavior can force it to have no chance for more observations and learn a correct preference ranking for optimal stable matching. How to minimize the player-optimal stable regret in general matching markets is still an open problem. 



\section{Experiments}
\label{sec:exp}

In this section, we compare the performances of our CA-TS with other related baselines in different environments\footnote{The code is available at \url{https://github.com/fangkongx/TSforMatchingMarkets}.} where all players and arms share the same preferences (Section \ref{sec:exp:global}), the minimum reward gap $\Delta$ is varied (Section \ref{sec:exp:gap}), and the market size is varied (Section \ref{sec:exp:size}). 
The baselines include CA-UCB \cite{liu2020bandit}, PhasedETC (P-ETC) \cite{basu21beyond}, Decentralized ETC (D-ETC) \cite{liu2020competing} and UCB-D4 \cite{basu21beyond}. 
Since UCB-D4 requires the market to satisfy uniqueness consistency, we only test it in global-preference case where only unique stable matching exists (Section \ref{sec:exp:global}). 
The hyper-parameters of all algorithms are set as their original paper with details in 
\ifsup
Appendix \ref{sec:add:exp}. 
\else
the Appendix.
\fi



We first report the cumulative stable regrets for all experiments. Also since the goal is to learn stable matchings and is not well reflected from cumulative regrets, we also report the cumulative market unstability, which is defined as the number of unstable matchings over $T$ rounds, as in \citeauthor{liu2020bandit} \shortcite{liu2020bandit}. 
In each market, we run all algorithms for $T=100k$ rounds and all results are averaged over $50$ independent runs. The error bars correspond to standard errors, which are computed as standard deviations divided by $\sqrt{50}$.


\subsection{Global Preferences}\label{sec:exp:global}

In this section, we construct a market of $N=5$ players and $K=5$ arms with global preferences, where all players share the same preference over arms and all arms share the same preference over players. 
Specifically, we set $\mu_{i,1}>\mu_{i,2}>\mu_{i,3}>\mu_{i,4}>\mu_{i,5}$ for each player $p_i$ and $\pi_{j,1}> \pi_{j,2}> \pi_{j,3}> \pi_{j,4}> \pi_{j,5}$ for each arm $a_j$. The preference towards the least favorite arm is set as $\mu_{i,5}=0.1$ for any $p_i$ and the reward gap between any two consecutively ranked arms is set as $\Delta=0.2$.
In this market, the unique stable matching is $\set{(p_1,a_1),(p_2,a_2),(p_3,a_3),(p_4,a_4),(p_5,a_5)}$, thus the assumption of uniqueness consistency required by UCB-D4 is satisfied. 
The feedback $X_{i,j}(t)$ for player $p_i$ on arm $a_j$ at time $t$ is drawn independently from Bernoulli$(\mu_{i,j})$. 

We report the cumulative stable regret of each player in Figure \ref{fig:global} (a-e) and the cumulative market unstability for each algorithm in Figure \ref{fig:global} (f).

\begin{figure}[th!] 
\includegraphics[width=0.49\linewidth]{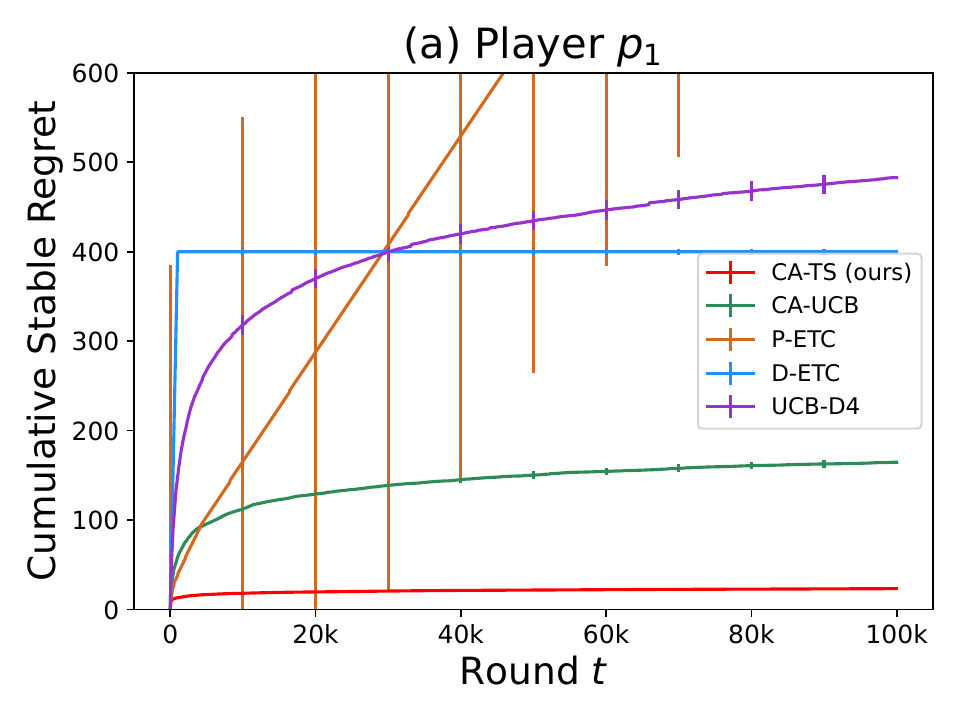}
\includegraphics[width=0.49\linewidth]{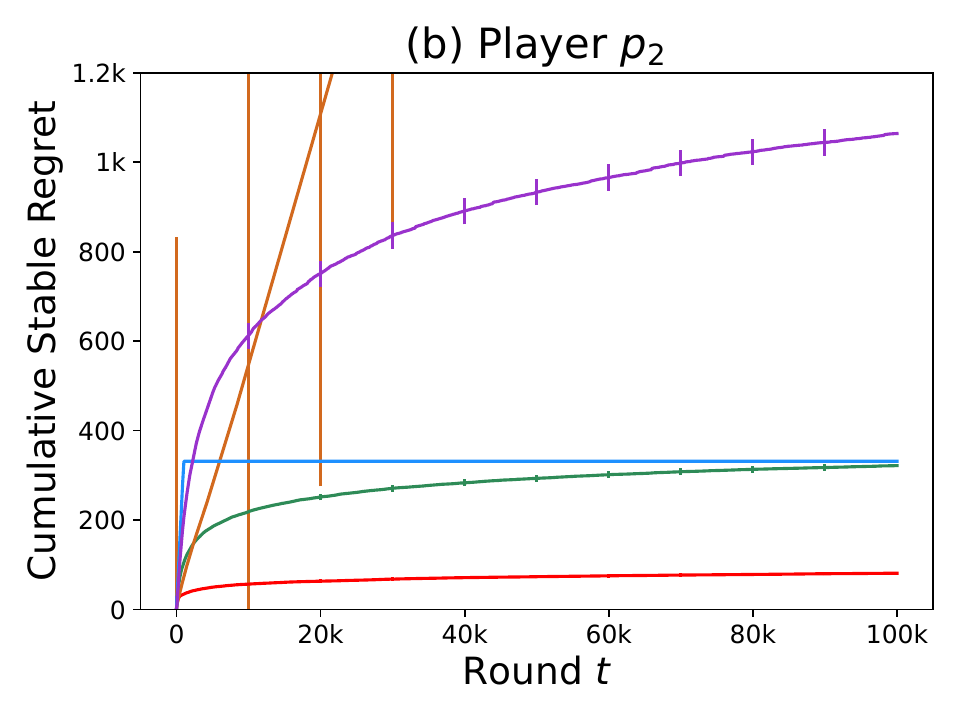} \\
\includegraphics[width=0.49\linewidth]{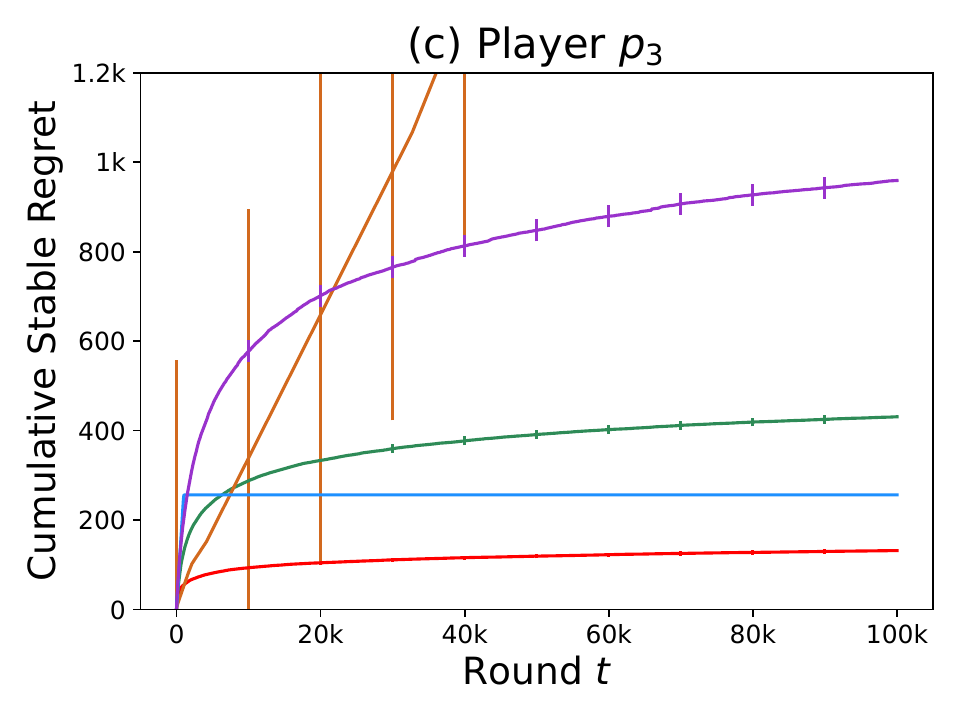}
\includegraphics[width=0.49\linewidth]{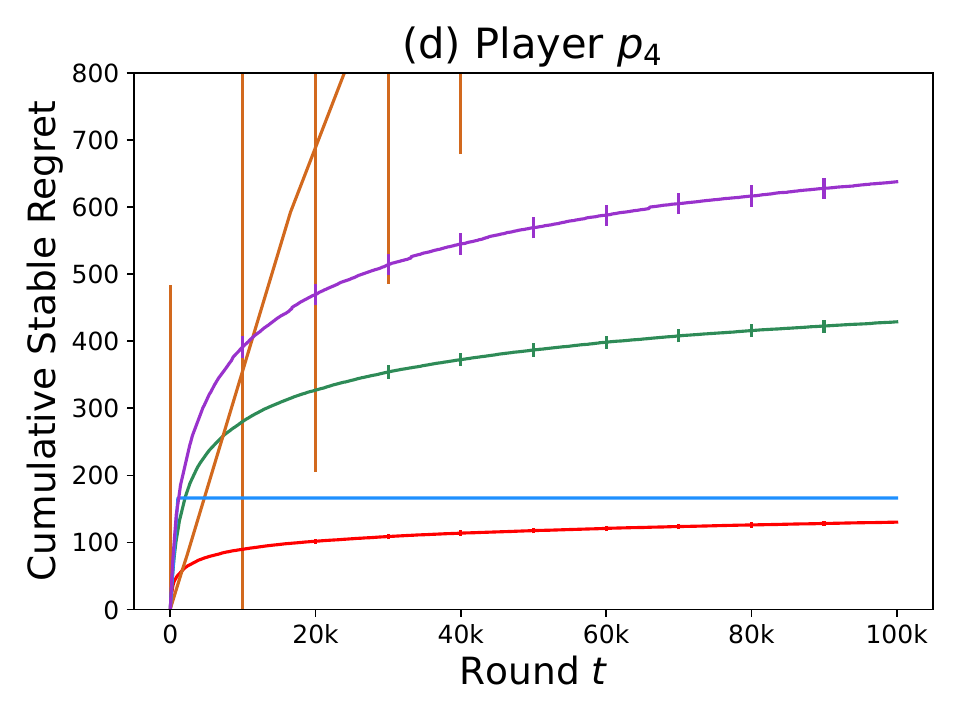}\\
\includegraphics[width=0.49\linewidth]{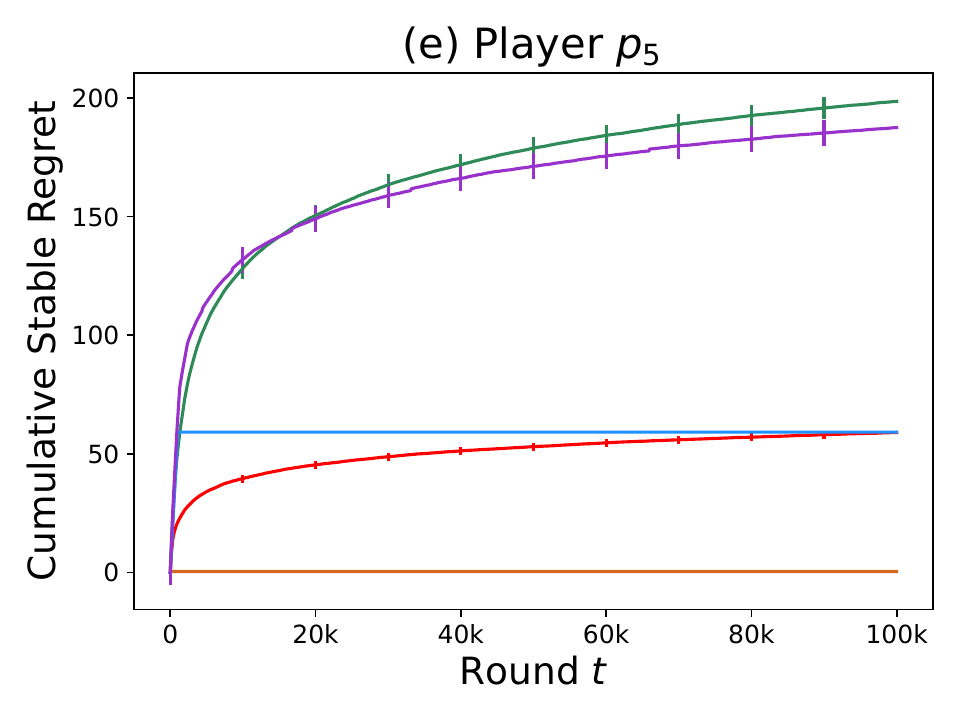}
\includegraphics[width=0.49\linewidth]{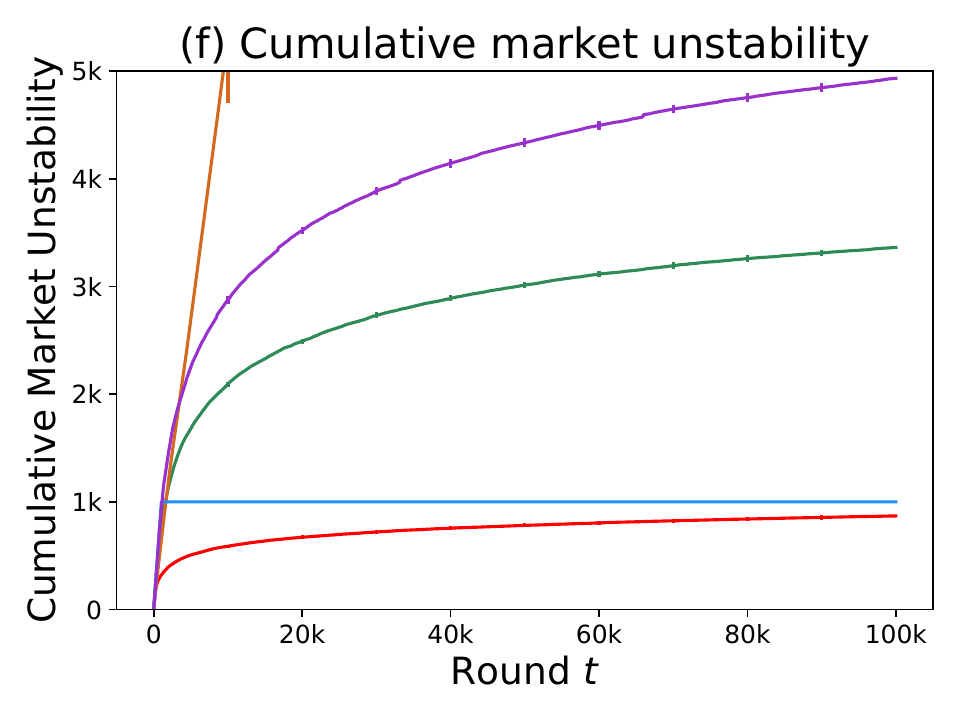}
  \caption{Experimental comparisons of our CA-TS with CA-UCB, P-ETC, D-ETC, and UCB-D4 in the market of $N=5$ players and $K=5$ arms with global preferences.}
  \label{fig:global}
\end{figure}

Our CA-TS pays the least stable regret on player $p_1,p_2,p_3$, and $p_4$ among all algorithms, while only pays higher regret than P-ETC on player $p_5$. 
However, under P-ETC, both the market unstability and the stable regrets of other players still do not converge, which phenomenon also coincides with the theoretical result that P-ETC suffers from cold-start and only works when $T$ is huge \cite{basu21beyond}. 
The reason for lower stable regret of $p_5$ is that P-ETC avoids rejections by following allocations of the GS algorithm and $p_5$ would suffer zero regret in this process since all other arms are better than its partner $a_5$ in the stable matching. 
For other algorithms, the stable regrets are not consistent among different players since they explore differently to find a stable matching. 
So we look into market unstability for further analysis. 
As shown in Figure \ref{fig:global} (f), our CA-TS shows the least market unstability. 
By carefully choosing an appropriate hyper-parameter $H$ representing the exploration budget, D-ETC performs slightly worse than ours. 
CA-UCB and UCB-D4 converge much slower and explore more to find a stable matching.

\subsection{Varying Gaps for Random Preferences}\label{sec:exp:gap}

In this experiment, we test the effect of the minimum reward gap $\Delta$ on the stable regret and market unstability. 
The market size is fixed with $N=5$ players and $K=5$ arms. The preference rankings of all players (arms) are generated as random permutations of arms (players, respectively).
We set the preference value towards the least favorite arm as $0.1$ for all players and test four different choices of reward gap between any two consecutively ranked arms 
$\Delta\in \set{0.2,0.15,0.1,0.05}$.  

Figure \ref{fig:delta} shows the maximum cumulative stable regret among all players and the cumulative market unstability for each algorithm under different values of $\Delta$. 

\begin{figure}[th!] 
\includegraphics[width=1\linewidth]{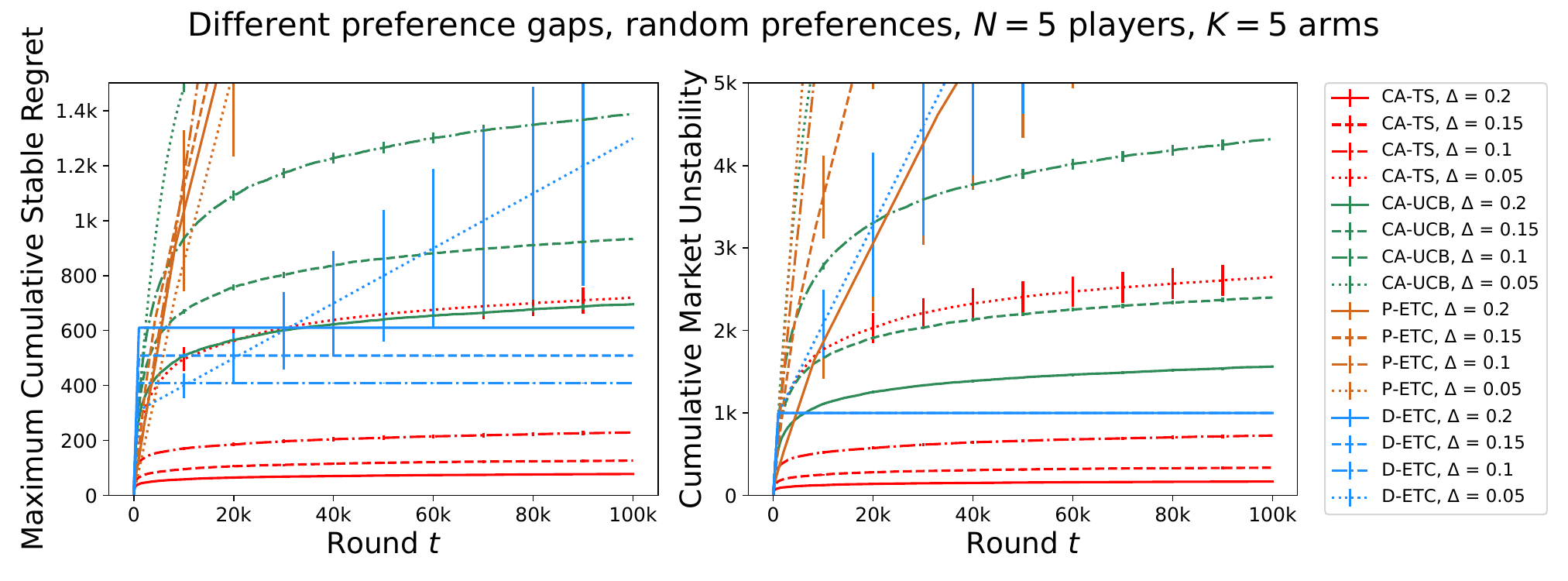}
  \caption{Experimental comparisons of our CA-TS with CA-UCB, P-ETC, and D-ETC in terms of maximum cumulative stable regret among players (left) and cumulative market unstability (right). Markets of $N=5$ players and $K=5$ arms with different $\Delta$ are tested.
  }
  \label{fig:delta}
\end{figure}

Our CA-TS pays the least stable regret among algorithms under each value of $\Delta$ and performs the most robust when $\Delta$ is changed.  
The D-ETC performs second to ours when $\Delta\in \set{0.2,0.15,0.1}$. However, its regret does not converge when $\Delta=0.05$, which illustrates its high dependence on the selection of $H$ and confirms the theoretical result that $H$ needs to depend on $\Delta$ to guarantee good performances \cite{liu2020competing}.
The CA-UCB algorithm pays higher stable regret than our CA-TS and its performances also change more drastically with different $\Delta$. 
The P-ETC algorithm still suffers from cold-start and does not converge in given periods. 

Similar observations can also be found from the perspective of the market unstability. As shown in Figure \ref{fig:delta} (right), the cumulative market unstability of algorithms gets higher when $\Delta$ is small. This is as expected since algorithms need to explore more to get accurate estimations on preference rankings and thus find the true stable matching. 

One may also concern that the performances of algorithms could be worse when the preferences of all players tend to be the same since players may always attempt same arms thus leading to more conflicts. In 
\ifsup
Appendix \ref{sec:add:exp}, 
\else
the Appendix, 
\fi
we test the performances of algorithms in markets with different heterogeneity of players' preferences and find our CA-TS still shows the best and the most robust performances.


\subsection{Varying Market Size for Random Preferences}\label{sec:exp:size}

In this section, we investigate how performances of algorithms are influenced by market sizes. 
Four markets with size $K=N\in \set{5,10,20,40}$ are tested. 
The preference rankings for players (arms) are generated as random permutations of arms (players, respectively). 
As in previous experiments, the preference value for the least favorite arm is set as $0.1$ and the reward gap between any two consecutively ranked arms is set as $\Delta=0.2$.
The Bernoulli-type reward thus is unsuitable since the preference value may be larger than $1$. Here we adopt the Gaussian-type reward with unit variance and run CA-TS with Gaussian priors
\ifsup
(Algorithm \ref{alg:CA-TS-gaussian} in Appendix \ref{sec:CA-TS:gaussian}). 
\else
. 
\fi




The maximum cumulative stable regret among players and the cumulative market unstability of algorithms in different markets are shown in Figure \ref{fig:N}. 


\begin{figure}[th!] 
\includegraphics[width=1\linewidth]{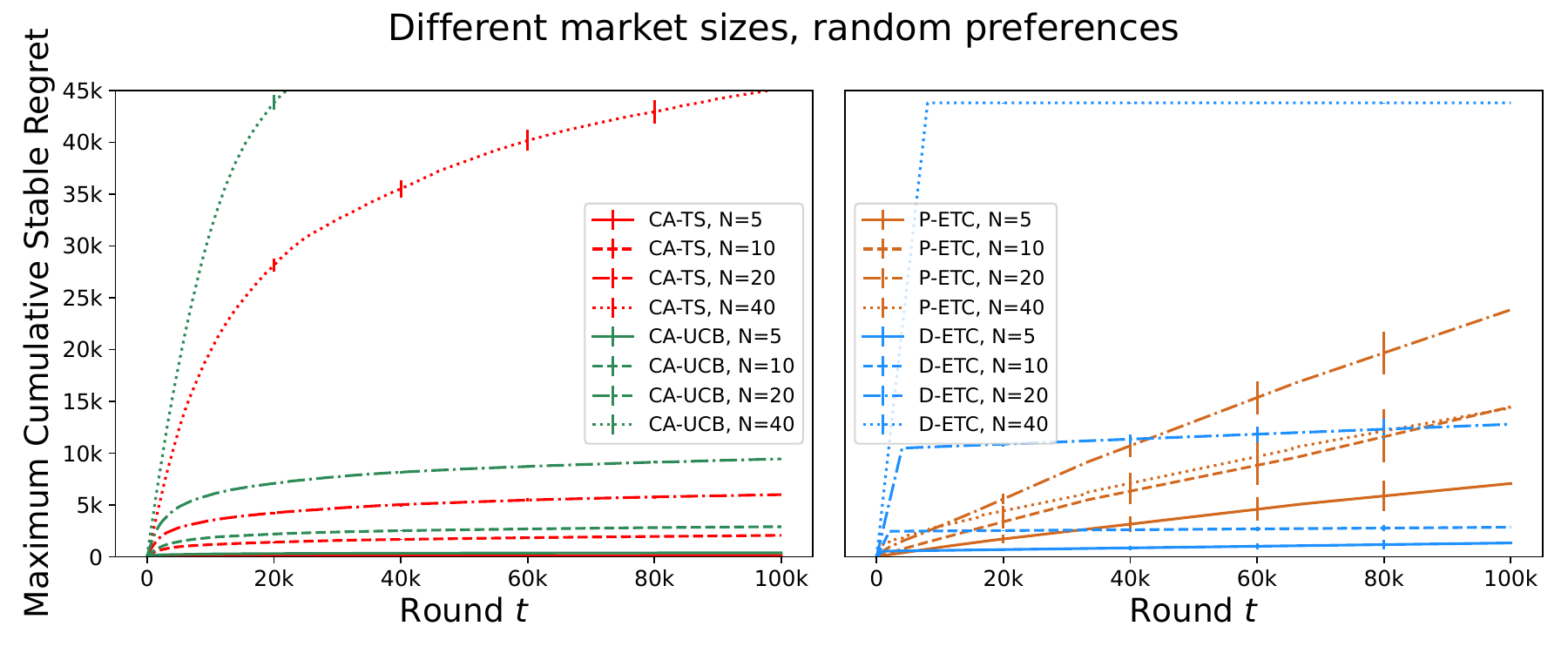}
\includegraphics[width=1\linewidth]{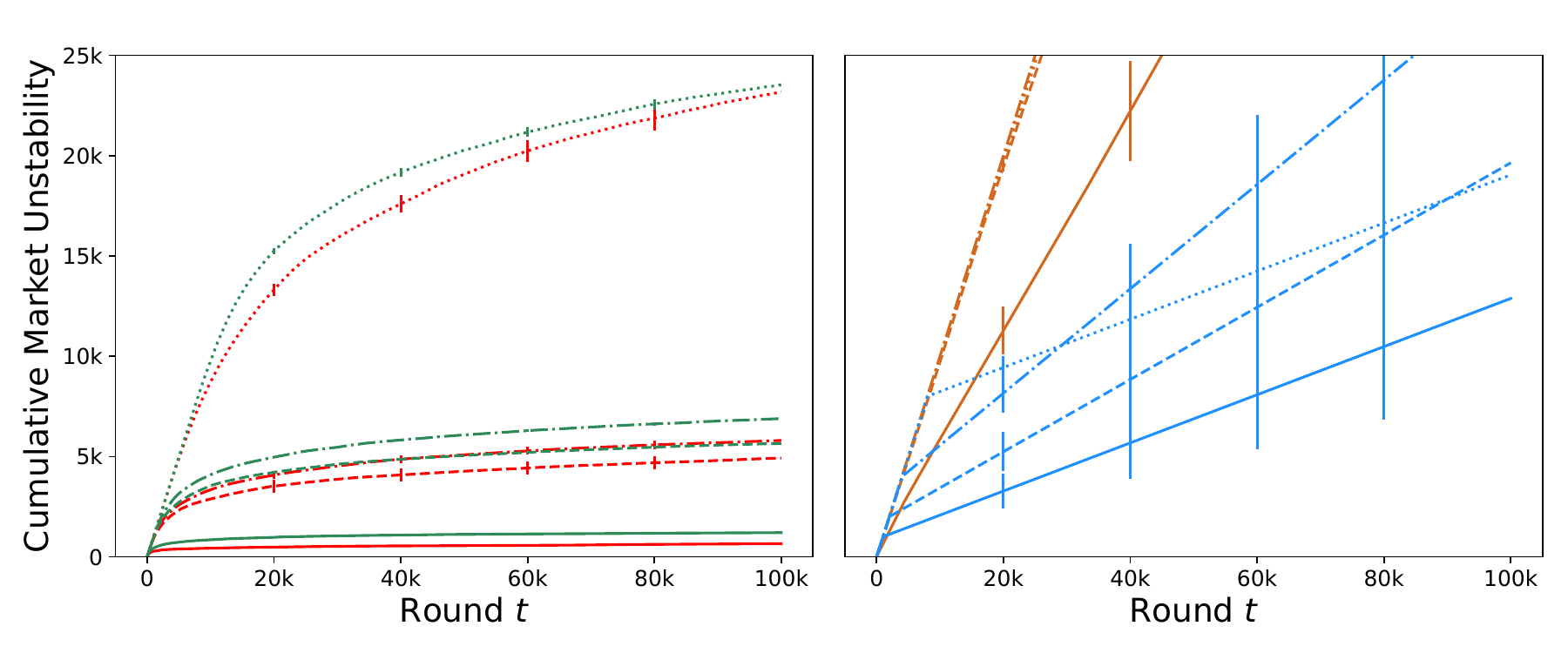}
  \caption{Experimental comparisons of our CA-TS with CA-UCB, P-ETC, and D-ETC in terms of maximum cumulative stable regret among players (top) and cumulative market unstability (bottom). 
  Markets with different sizes $K=N \in \set{5,10,20,40}$ are tested.}
  \label{fig:N}
\end{figure}

When adopting Gaussian-type reward, our CA-TS still shows the least stable regret among all algorithms when $N=5,10,20$, while is only slightly worse than D-ETC when $N=40$. But as shown in terms of market unstability, D-ETC still does not converge in these four markets. 
This is because some runs (among $50$) are non-convergent and shows growing unstability and regret. But they wrongly match players with closely ranked arms compared with the matched arm in the stable matching, thus the regret though increasing, is not large.  
Since we only report the maximum regret among all players, those non-convergent runs do not contribute to this metric in given periods. 
The CA-UCB algorithm performs a little worse than ours and the regret of P-ETC still does not converge and keeps increasing. 
Similar observations can also be found in terms of market unstability. 
As expected, algorithms converge more slowly and explore more in larger market.
Though our theoretical result (Theorem \ref{thm:bound}) shows exponential dependence on $N$, this dependence demonstrated by experimental results is much better.




\section{Conclusion}\label{sec:conclusion}

In this paper, we show the first TS-type algorithm, CA-TS, for the two-sided decentralized matching markets. 
Compared with previous UCB and ETC-type algorithms, the special nature of TS and the setting of matching markets bring additional analysis difficulties, since TS requires enough observations on arms in stable matchings but market participants may force a player to observe no feedback.  
We overcome the difficulty by bounding the success probability of observing these arms with a fine division of the horizon based on the number of previous attempts and the analysis of other players' influence. 
Our stable regret upper bound achieves the same order as previous UCB-type algorithms. 
Extensive experiments on markets with different properties are conducted to verify the practical performances of the TS-type algorithm. Compared with all baselines, our CA-TS learns stable matchings much faster and shows more robust performances. 



We study a decentralized setting where players independently choose arms. An interesting future direction is to design TS-type algorithms for scenarios with more  communication such as the centralized setting in \citeauthor{liu2020competing}\shortcite{liu2020competing} where players can communicate with a central platform.  
New design ideas are needed as the requirement of TS is more restrictive than UCB. 
An example in 
\ifsup
Appendix \ref{sec:conterexample}
\else
the Appendix
\fi 
shows that if we only replace the UCB index in \citeauthor{liu2020competing} \shortcite{liu2020competing} with the sampled parameters from posterior distributions as TS, players can suffer a stable regret of order $O(T)$. 
This example further illustrates the difference between TS and UCB and the challenge of TS analysis in matching markets.




\bibliographystyle{named}
\bibliography{ref}

\ifsup
\appendix
\onecolumn
\newpage

\section{Proof Sketch of Theorem \ref{thm:bound}}\label{sec:prooksketch}
In this section, we provide a proof sketch of Theorem \ref{thm:bound}. Please see Appendix \ref{sec:proof:dcts} for the complete proof.

Denote $M^* = \set{ m\mid m:\cN\to \cK, m \text{ is stable} }$ as the set of all stable matchings between the player set $\cN$ and the arm set $\cK$.
The stable regret can be then upper bounded by 
\begin{align*}
R_i(T) \le \EE{\sum_{t=1}^T \bOne{A(t) \notin M^*}}\cdot \Delta_{i,\max}  \,.
\end{align*}
Next, we will analyze the event $\set{A(t) \notin M^*}$. 
Define 
\begin{align*}
	\cE_t = \bigcap_{i \in [N]} \set{\argmax_{j \in S_i(t)}\theta_{i,j}(t) = \argmax_{j \in S_i(t)}\mu_{i,j} }
\end{align*}
as the event that for any player $p_i \in \cN$, the arm $a_j$ with the highest index $\theta_{i,j}(t)$ is actually the player $p_i$'s favorite arm in the plausible set $S_i(t)$.  Lemma \ref{lem:generalize_preserve} shows a sufficient condition for the event $\set{A(t) \in M^*}$ to occur.

\begin{lemma}
\label{lem:generalize_preserve}
For any length $h_t=0,1,\ldots,t-2$, 
\begin{align*}
(\mathop{\cap}\limits_{s=0}^{h_t} \cE_{t-s}) \cap (\mathop{\cup}_{s=0}^{h_t} \set{A(t-s-1) \in M^* }) \subseteq \set{A(t) \in M^*}   \,. 
\end{align*}
\end{lemma}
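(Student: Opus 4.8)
The plan is to isolate a one-step invariance of the algorithm and then iterate it over the window $\{t-h_t-1,\dots,t\}$. First I would establish the single-round inclusion
$$\{A(\tau)\in M^*\}\cap \mathcal{E}_{\tau+1}\ \subseteq\ \{A(\tau+1)\in M^*\}\qquad\text{for every round }\tau,$$
and, more precisely, show that on this event $A(\tau+1)=A(\tau)$, so that the very same stable matching is reproduced. Granting this, the lemma follows at once: on the event $(\bigcap_{s=0}^{h_t}\mathcal{E}_{t-s})\cap(\bigcup_{s=0}^{h_t}\{A(t-s-1)\in M^*\})$ there is some $s^*\in\{0,\dots,h_t\}$ with $A(t-s^*-1)\in M^*$; putting $\tau=t-s^*-1$, the events $\mathcal{E}_{\tau+1},\dots,\mathcal{E}_t$ all hold (they sit among $\mathcal{E}_t,\dots,\mathcal{E}_{t-h_t}$), and applying the one-step inclusion successively from $\tau$ up to $t$ gives $A(t)\in M^*$.

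To prove the one-step inclusion, suppose $A(\tau)=m$ for some $m\in M^*$. A matching is injective, so at round $\tau$ no two players collide; hence every player wins its attempt and $\bar A_i(\tau)=A_i(\tau)=m(i)$ for all $i$. Feeding this into the definition of the plausible set, at round $\tau+1$ one gets $a_j\in S_i(\tau+1)$ iff either $a_j$ is unmatched under $m$ or $a_j$ prefers $p_i$ to its $m$-partner (weakly, hence strictly, since arm rankings are strict); in particular $m(i)\in S_i(\tau+1)$. The decisive point is that stability of $m$ forces $m(i)=\argmax_{j\in S_i(\tau+1)}\mu_{i,j}$: take any $a_j\in S_i(\tau+1)$ with $a_j\neq m(i)$; if $a_j$ is unmatched under $m$, stability already gives $\mu_{i,j}\le\mu_{i,m(i)}$, and if $a_j$ is matched under $m$, then membership in $S_i(\tau+1)$ says $a_j$ strictly prefers $p_i$ to its current partner, so $\mu_{i,j}>\mu_{i,m(i)}$ would make $(p_i,a_j)$ a blocking pair of $m$, a contradiction. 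With this in hand I split on the delay bit: if $D_i(\tau+1)=1$ then $A_i(\tau+1)=A_i(\tau)=m(i)$; if $D_i(\tau+1)=0$ then $A_i(\tau+1)=\argmax_{j\in S_i(\tau+1)}\theta_{i,j}(\tau+1)$, which on $\mathcal{E}_{\tau+1}$ equals $\argmax_{j\in S_i(\tau+1)}\mu_{i,j}=m(i)$. Either way $A_i(\tau+1)=m(i)$, so $A(\tau+1)=m\in M^*$; and injectivity of $m$ again gives $\bar A_i(\tau+1)=m(i)$, which is exactly what is needed to chain the argument into the next round.

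The one step with real content is the structural claim that a plausible set constructed from a stable matching $m$ contains $m(i)$ and no arm that $p_i$ strictly prefers to $m(i)$ — this is where the absence of blocking pairs is used, and where two edge cases need care: arms left unmatched when $N<K$ (stability ensures no player strictly prefers such an arm to its own partner, so they cannot beat $m(i)$) and ties among the values $\mu_{i,j}$ (resolved by the fixed tie-breaking convention, under which $\mathcal{E}_{\tau+1}$ already pins player $p_i$'s choice to $m(i)$). The remaining pieces — tracking the random-delay bits and running the finite induction over at most $h_t+1$ rounds — are routine.
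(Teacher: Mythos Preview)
Your proposal is correct and follows essentially the same approach as the paper: establish the one-step preservation $\{A(\tau)\in M^*\}\cap\cE_{\tau+1}\subseteq\{A(\tau+1)=A(\tau)\}$ via the blocking-pair characterization of stability, then iterate over the window. The paper phrases the one-step claim as a proof by contradiction (assume some player switches and exhibit a blocking pair), while you argue it directly by showing $m(i)=\argmax_{j\in S_i(\tau+1)}\mu_{i,j}$ and then splitting on the delay bit, but these are two presentations of the same argument.
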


\begin{proof}

Recall in Algorithm \ref{alg:CA-TS}, $A_i(t)$ of each player $p_i$ can be the arm with the highest index in the plausible set or the arm $A_i(t-1)$ attempted by $p_i$ in the previous round. 
We first claim that if $A(t-1) \in M^*$ and $\cE_t$ holds, then $A(t)=A(t-1) \in M^*$. This claim can be proved by contradiction.  

Suppose $A(t)\neq A(t-1)$, then there must exist a player $p_i$ such that $A_i(t) = a_j \neq A_i(t-1)$. According to Algorithm \ref{alg:CA-TS} and $A(t-1) \in M^*$, we have $a_j \in S_i(t)$ and $A_{i}(t-1) \in S_i(t)$. 
Based on event $\cE_t$, there must be $\mu_{i,j} > \mu_{i,A_{i}(t-1)}$, namely player $p_i$ prefers $a_j$ to $A_{i}(t-1)$. Let $p'$ be the player matched with $a_j$ at time $t-1$. Then there are two cases for $p'$.
If $p' = \emptyset$ and $a_j$ is matched with nobody at $t-1$, then in $A(t-1)$, both $p_i$ and $a_j$ prefer being matched with each other to current matched one. 
Otherwise, according to Line \ref{alg:cats:plausible} in Algorithm \ref{alg:CA-TS} and $a_j \in S_i(t)$, we can also conclude $p_i$ and $a_j$ prefer being matched with each other to the matched one in $A(t-1)$. In both cases, $A(t-1) \in M^*$ does not hold any more. Thus the claim is proved. 

Next we prove Lemma \ref{lem:generalize_preserve}. Suppose $\cap_{s=0}^{h_t} \cE_{t-s}$ and $\exists s' \in \set{0,1,\ldots,h_t}, A(t-s'-1) \in M^*$. Then with the above claim, we can recursively show $A(t-s) \in M^*$ for $s=s',s'-1,\ldots,0$, thus $A(t) \in M^*$. 
\end{proof}

With Lemma \ref{lem:generalize_preserve}, the stable regret can be bounded by
\begin{align}
{R}_i(T) \le& \mathbb{E}  \left[\sum_{t=1}^T \bOne{\mathop{\cap}\limits_{s=0}^{h_t} \bracket{\cE_{t-s} \cap \set{A(t-s-1) \notin M^*}}}    +  \sum_{t=1}^T\sum_{s=0}^{h_t} \bOne{ \urcorner \cE_{t-s} }      \right]\cdot \Delta_{i,\max}  \,.
\end{align}
The first term can be regarded as the time required for the decentralized market to converge to stable matching with correct rankings. We can directly apply \cite[Lemma 14]{liu2020bandit} to bound this term. 
However, the analysis for the second term, which reflects the number of rounds needed to learn correct rankings,
 should depend on the special nature of TS and is quite different from that for UCB. 

 In the following, we bound a simple version with $h_t=0$ for the second term. 
For any $i\in[N]$ and $j\in[K]$, let $N_{i,j}(t) = \sum_{\tau<t}\bOne{ \bar{A}_i(\tau) = j}$ be the number of observations for $\mu_{i,j}$ before the start of time $t$. 
Note the event $\urcorner \cE_{t}$ implies that there exist $i\in[N]$ and $j,{j'}\in S_i(t)$ such that $j= \argmax_{k\in S_i(t)}\theta_{i,k}(t)$ and $\mu_{i,j}<\mu_{i,j'}$. Define $\cA_{i,j}(t) = \set{\abs{ \theta_{i,j}(t) - {\mu}_{i,j}} > \sqrt{6\log T/{N_{i,j}(t)}}}$ and $\cB_{i,j,j'}(t) = \set{\Delta_{i,j,j'} < \abs{ \theta_{i,j}(t) - \mu_{i,j} } + 2\varepsilon} $. Then, 


\begin{align}
\EE{\sum_{t=1}^T \bOne{\urcorner \cE_{t}}} \le \sum_{i,j,j':\mu_{i,j}<\mu_{i,j'}} \mathbb{E}\left[ \sum_{t=1}^T \left(  \bOne{\cA_{i,j}(t)} +\bOne{  \urcorner \cB_{i,j,j'}(t), j =\argmax_{k \in S_i(t)}\theta_{i,k}(t), j'\in S_i(t) }  \right.\right.  \notag \\
\left.\left.  + \bOne{\urcorner \cA_{i,j}(t),  \cB_{i,j,j'}(t), j =\argmax_{k \in S_i(t)}\theta_{i,k}(t)  }  \right)\right] \label{eq:bound:TS}
\end{align}

The first term in \eqref{eq:bound:TS} can be regarded as a bad event that the sampled parameters fall outside the confidence rigion and can be bounded by a constant. The last term represents the number of observations for $a_{j}$ required to distinguish `bad' arm $a_j$ from `good' arm $a_{j'}$ and can be bounded by $O(\log T/(\rho \Delta^2))$. The second term, a unique term for TS, comes from what we discussed in Section \ref{sec:discuss} and reflects the number of observations for $a_{j'}$ needed to guarantee $\abs{\theta_{i,j'}-\mu_{i,j'}} \le \varepsilon $. 
By elaborately separating the horizon with slices and computing the probability of being successfully matched with $a_{j'}$, this term can be bounded by $O\bracket{1/\bracket{\rho \varepsilon^6}}$.

\section{Proof of Theorem \ref{thm:bound}}
\label{sec:proof:dcts}

Before the main proof, we first introduce some notations that will be used. 

Recall that $M^*$ is the set of all stable matchings between the player set $\cN$ and the arm set $\cK$, $N_{i,j}(t) = \sum_{\tau<t}\bOne{ \bar{A}_i(\tau) = j}$ is the number of observations of $\mu_{i,j}$ before the start of round $t$. 
For any player $p_i$ and arm $a_j$, we further define $\hat{\mu}_{i,j}(t) = \frac{1}{N_{i,j}(t)}\sum_{\tau<t: \bar{A}_i(\tau) = j } X_{i,j}(\tau)$ be the empirical mean outcome of $\mu_{i,j}$, $a_{i,j}(t)$ and $b_{i,j}(t)$ be the value of $a_{i,j}$ and $b_{i,j}$, respectively, before the start of round $t$.

The stable regret for player $p_i$ can be bounded as
\begin{align}
{R}_i(T) &\le \Delta_{i,\max}\cdot  \EE{\sum_{t=1}^T \bOne{A(t) \notin M^*}} \notag \\
&\le \Delta_{i,\max} \set{ \EE{\sum_{t=1}^T \sum_{s=0}^{h_t}\bOne{\urcorner \cE_{t-s}  } } + \EE{\sum_{t=1}^T \bOne{\cap_{s=0}^{h_t} \bracket{\cE_{t-s} \cap \set{A(t-s-1) \notin M^*}}}}} \label{eq:introduceht} \\
&\le \Delta_{i,\max} \set{\sum_{s=0}^{h_T} \EE{\sum_{t=1}^T \bOne{\urcorner \cE_{t}}}  + \sum_{t=1}^T \PP{\cap_{s=0}^{h_t} \bracket{\cE_{t-s} \cap \set{A(t-s-1) \notin M^*}}}} \label{eq:decen:boundall}\\
&\le \Delta_{i,\max} \set{ (h_T+1)NK^2 \bracket{ 4+\frac{6\log T}{\rho(\Delta-2\varepsilon)^2} + \frac{C}{\rho }\cdot\frac{1}{\varepsilon^6} } + \sum_{t=1}^T \bracket{1-\rho^{N^4}}^{\lfloor h_t/N^4 \rfloor} }  \label{eq:decen:replace} \\
&\le \Delta_{i,\max} \set{ \frac{2N^5K^2\log T}{\rho^{N^4}} \bracket{ 4+\frac{6\log T}{\rho (\Delta-2\varepsilon)^2} + \frac{C}{\rho}\cdot\frac{1}{\varepsilon^6} } + 6 + \frac{6N^4}{\rho^{N^4}} } \label{eq:decen:B}\\
&=O\bracket{ \frac{N^5K^2 \log^2 T}{\rho^{N^4}\Delta^2}\cdot \Delta_{i,\max} } \,,\notag
\end{align}
where \eqref{eq:introduceht} comes from the result of Lemma \ref{lem:generalize_preserve},  \eqref{eq:decen:replace} comes from the result of Lemma \ref{lem:decen:TSpart} and Lemma \ref{lem:decen:matchingpart}, \eqref{eq:decen:B} holds by setting $h_t = \min\set{t, 2\lceil \frac{N^4 \log T}{\rho^{N^4}} \rceil }  -1$ and thus
\begin{align*}
\sum_{t=1}^T \bracket{1-\rho^{N^4}}^{\lfloor h_t/N^4 \rfloor} \le 3\sum_{t=1}^T \exp\bracket{ -h_t \rho^{N^4}/N^4 }\le 6T\exp\bracket{-2\lceil \frac{N^4 \log T}{\rho^{N^4}} \rceil \cdot \frac{\rho^{N^4}}{2N^4}} +\frac{6N^4}{\rho^{N^4}}\le 6+\frac{6N^4}{\rho^{N^4}}\,.
\end{align*}




\begin{lemma}\label{lem:decen:TSpart}
(bound $\EE{\sum_{t=1}^T \bOne{\urcorner \cE_t}}$ in \eqref{eq:decen:boundall})
\begin{align*}
	\EE{\sum_{t=1}^T \bOne{\urcorner \cE_t}} &\le \sum_{i,j,j': \mu_{i,j}<\mu_{i,j'} } \bracket{ 4+\frac{6\log T}{\rho(\Delta_{i,j,j'}-2\varepsilon)^2} + \frac{C}{\rho}\cdot \frac{1}{\varepsilon^6} } \\
	&\le NK^2 \bracket{ 4+\frac{6\log T}{\rho(\Delta-2\varepsilon)^2} + \frac{C}{\rho}\cdot\frac{1}{\varepsilon^6} }\,,
\end{align*}
for any $\varepsilon$ such that $\Delta > 2\varepsilon$, where $\rho = \lambda^{N-1}(1-\lambda)$. 
\end{lemma}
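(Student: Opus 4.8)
The plan is to pass from the global event $\urcorner\cE_t$ to a union over triples $(i,j,j')$ with $\mu_{i,j}<\mu_{i,j'}$, split the resulting indicators into the three pieces appearing in \eqref{eq:bound:TS}, and bound each piece separately. For the reduction: if $\urcorner\cE_t$ holds then, as noted before \eqref{eq:bound:TS}, some player $p_i$ admits arms $j:=\argmax_{k\in S_i(t)}\theta_{i,k}(t)$ and $j'\in S_i(t)$ with $\mu_{i,j}<\mu_{i,j'}$, so $\bOne{\urcorner\cE_t}\le\sum_i\sum_{j,j':\mu_{i,j}<\mu_{i,j'}}\bOne{j=\argmax_{k\in S_i(t)}\theta_{i,k}(t),\ j'\in S_i(t)}$. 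Inserting the events $\cA_{i,j}(t)$ and $\cB_{i,j,j'}(t)$ into each summand gives exactly the three terms of \eqref{eq:bound:TS}: a round is charged to (A)~$\bOne{\cA_{i,j}(t)}$; or, when $\urcorner\cA_{i,j}(t)$ holds and $\cB_{i,j,j'}(t)$ fails, to (B)~$\bOne{\urcorner\cB_{i,j,j'}(t),\,j=\argmax_{k\in S_i(t)}\theta_{i,k}(t),\,j'\in S_i(t)}$; or, when $\urcorner\cA_{i,j}(t)\cap\cB_{i,j,j'}(t)$ holds, to (C)~$\bOne{\urcorner\cA_{i,j}(t),\,\cB_{i,j,j'}(t),\,j=\argmax_{k\in S_i(t)}\theta_{i,k}(t)}$. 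It then suffices to bound the three sums by $4$, $C/(\rho\varepsilon^6)$ and $6\log T/(\rho(\Delta_{i,j,j'}-2\varepsilon)^2)$ respectively; summing over the at most $NK^2$ triples and using $\Delta_{i,j,j'}\ge\Delta>2\varepsilon$ then yields both displayed inequalities.

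Term (A) is the routine ``resampled value leaves its confidence band'' event: bounding $|\theta_{i,j}(t)-\mu_{i,j}|$ by $|\theta_{i,j}(t)-\hat\mu_{i,j}(t)|+|\hat\mu_{i,j}(t)-\mu_{i,j}|$, a Hoeffding inequality for the empirical mean and a Beta tail bound for the resampled posterior value each give a per-round failure probability of order $T^{-3}$ uniformly over the current observation count, so $\EE{\sum_t\bOne{\cA_{i,j}(t)}}=O(1)\le 4$; I would import this estimate from the standard single-agent TS analysis \cite{agrawal2013further}. For term (C), $\urcorner\cA_{i,j}(t)\cap\cB_{i,j,j'}(t)$ forces $\sqrt{6\log T/N_{i,j}(t)}\ge|\theta_{i,j}(t)-\mu_{i,j}|>\Delta_{i,j,j'}-2\varepsilon$, i.e.\ $N_{i,j}(t)<6\log T/(\Delta_{i,j,j'}-2\varepsilon)^2=:n_0$, so $a_j$ has been observed fewer than $n_0$ times; meanwhile $j=\argmax_{k\in S_i(t)}\theta_{i,k}(t)$ and $a_j\in S_i(t)$, so on the event of conditional probability $\rho=\lambda^{N-1}(1-\lambda)$ that $D_i(t)=0$ while $D_{i'}(t)=1$ for every $i'\neq i$, each other player repeats its round-$(t-1)$ choice and, by the plausible-set property used in Lemma \ref{lem:generalize_preserve}, $p_i$ wins $a_j$ and $N_{i,j}$ increases. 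Thus every term-(C) round increments $N_{i,j}$ with conditional probability at least $\rho$ and no such round occurs once $N_{i,j}\ge n_0$; stochastic domination by a sum of $n_0$ geometric$(\rho)$ variables (via optional stopping) then bounds $\EE{\sum_t(\mathrm{C})}$ by $n_0/\rho$.

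Term (B) is the crux and the part genuinely new to matching markets. First, $\urcorner\cB_{i,j,j'}(t)$ gives $\theta_{i,j}(t)\le\mu_{i,j}+\Delta_{i,j,j'}-2\varepsilon=\mu_{i,j'}-2\varepsilon$, and since $j'\in S_i(t)$ with $j$ the posterior-argmax over $S_i(t)$ we get $\theta_{i,j'}(t)\le\theta_{i,j}(t)\le\mu_{i,j'}-2\varepsilon$; hence $\sum_t(\mathrm{B})$ is at most the number of rounds on which the resampled value of the better arm $a_{j'}$ lies at least $2\varepsilon$ below $\mu_{i,j'}$ while $j'\in S_i(t)$. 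In the classical single-agent analysis one controls such a count, at each observation level of $a_{j'}$, by the reciprocal of the probability that a fresh sample of $\mu_{i,j'}$ exceeds $\mu_{i,j'}-2\varepsilon$ and sums a geometric-type series; the obstruction here is that $a_{j'}$ being $p_i$'s best plausible option need not produce an observation of $\mu_{i,j'}$, because another player may block $p_i$. Following the paper's sketch, I would introduce an auxiliary counter that records the rounds on which $p_i$ selects $a_{j'}$ \emph{and} the $\rho$-probability delay configuration above occurs, so that — again by the plausible-set property — every increment of this counter is a genuine observation of $\mu_{i,j'}$ and the counter lower-bounds $N_{i,j'}(t)$. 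Partitioning $[T]$ according to the counter value, within the $n$-th slice one has $N_{i,j'}(t)\ge n$, so off an $O(1)$-probability event that controls all the empirical means $\hat\mu_{i,j'}$ (Hoeffding plus a union bound over observation counts $\le T$) the Beta posterior of $\mu_{i,j'}$ is concentrated: the per-round probability of a sample below $\mu_{i,j'}-2\varepsilon$ decays geometrically in $n$, while the per-round probability of advancing the counter stays bounded below, so a negative-binomial estimate controls the expected length of each slice and the slice contributions sum to $O(1/(\rho\varepsilon^6))$, the powers of $\varepsilon$ arising from combining ``rounds needed to obtain one observation'' with ``observations needed to make under-sampling unlikely''. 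The main difficulty throughout is precisely this conversion of favourable samples of $\mu_{i,j'}$ into genuine observations at a rate bounded below despite the blocking of other players, together with carrying the resulting counter through the slice decomposition so that the Beta posterior behaves uniformly on each slice; the reduction and terms (A) and (C) are, by comparison, routine.
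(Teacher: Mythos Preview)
Your three-term decomposition and the treatment of terms (A) and (C) coincide with the paper's proof: (A) via the Beta/Hoeffding tail bounds giving a per-round failure probability $O(T^{-3})$, and (C) via the implication $\urcorner\cA_{i,j}(t)\cap\cB_{i,j,j'}(t)\Rightarrow N_{i,j}(t)\le 6\log T/(\Delta_{i,j,j'}-2\varepsilon)^2$ together with the observation that, conditional on $j=\argmax_{k\in S_i(t)}\theta_{i,k}(t)$, the delay pattern of probability $\rho$ guarantees $\bar A_i(t)=j$.

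There is, however, a genuine gap in your handling of term~(B). After deriving $\theta_{i,j}(t)\le\mu_{i,j'}-2\varepsilon$ you relax term~(B) to the event $\{\theta_{i,j'}(t)\le\mu_{i,j'}-2\varepsilon,\ j'\in S_i(t)\}$ and then claim that ``the per-round probability of advancing the counter stays bounded below''. The relaxation throws away exactly the information that makes this claim true. On a genuine term-(B) round you know more than $j'\in S_i(t)$: you know that $j$ is the posterior argmax over $S_i(t)$ and that $\theta_{i,j}(t)\le\mu_{i,j'}-2\varepsilon$, hence \emph{every} $\theta_{i,k}(t)$ with $k\in S_i(t)$, $k\neq j'$, lies below $\mu_{i,j'}-2\varepsilon$. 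That is precisely what guarantees the counterfactual ``had the $j'$-sample alone been within $\varepsilon$ of $\mu_{i,j'}$, then $j'$ would be the argmax and (under the delay pattern) observed''. On a generic round of your relaxed event nothing prevents some third arm $k\in S_i(t)$ from having a large sample, so an accurate $j'$-sample would still not make $p_i$ select $a_{j'}$; your counter then has no lower bound on its advancement probability, and the negative-binomial slice bound does not follow.

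The paper avoids this by not relaxing: it introduces $\cF_{i,j',1}(t)$ (``for any $\theta'_{i,j'}$ within $\varepsilon$ of $\mu_{i,j'}$, arm $j'$ would be the argmax over $S_i(t)$ and $\bar A_i(t)=j'$ with probability $\ge\rho$'') and $\cF_{i,j',2}(t)=\{|\theta_{i,j'}(t)-\mu_{i,j'}|>\varepsilon\}$, shows term~(B) $\subseteq\cF_{i,j',1}\wedge\cF_{i,j',2}$, and then runs a two-layer argument. First, between consecutive occurrences $\tau_\ell$ of $\cF_{i,j',1}\wedge\neg\cF_{i,j',2}$, the number of $\cF_{i,j',1}\wedge\cF_{i,j',2}$ rounds is dominated by a geometric in $\inf_t\PP{\neg\cF_{i,j',2}\mid\cH_t}$ (Lemma~\ref{lem:TS:exponential}), since $\cF_{i,j',1}$ is $\cH_t$-measurable while $\cF_{i,j',2}$ depends only on the fresh $j'$-sample. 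Second, a thinned counter $N'_{i,j'}$ (incremented at each $\tau_\ell$ on the event $\bar A_i(\tau_\ell)=j'$, with further thinning by $\rho/\rho(\tau_\ell)$ so that increments occur with probability exactly $\rho$) groups the $\tau_\ell$'s into blocks of expected size $1/\rho$; since $N'_{i,j'}\le N_{i,j'}$, in the $q$-th block one has $N_{i,j'}\ge q$ and Lemma~\ref{fact:nips20} supplies the $O(\varepsilon^{-4})$ and $e^{-\varepsilon^2 q/8}O(\varepsilon^{-4})$ bounds whose sum over $q$ gives $C/(\rho\varepsilon^6)$. The point to retain is that the $\cF_{i,j',1}$-style event, not merely $j'\in S_i(t)$, is what links ``bad $j'$-sample'' rounds to rounds on which an observation of $a_{j'}$ is available with probability bounded below.
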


\begin{proof}
	\begin{align*}
		\EE{\sum_{t=1}^T \bOne{\urcorner \cE_t}}  &= \EE{\sum_{t=1}^T\bOne{ \exists i \in [N]: \argmax_{k \in S_i(t)} \theta_{i,k}(t) \neq \argmax_{k \in S_i(t)}\mu_{i,k} }} \\
		&\le   \sum_{i,j,j': \mu_{i,j}<\mu_{i,j'} } \EE{\sum_{t=1}^T \bOne{ j=\argmax_{k \in S_i(t)}\theta_{i,k}(t), j'\in S_i(t) }}\,.
	\end{align*}
Recall event
\begin{align*}
\cA_{i,j}(t) = \set{\abs{ \theta_{i,j}(t) - {\mu}_{i,j}} > \sqrt{\frac{6\log T}{N_{i,j}(t)}}}\,, ~~~
\cB_{i,j,j'}(t) = \set{\Delta_{i,j,j'} < \abs{ \theta_{i,j}(t) - \mu_{i,j} } + 2\varepsilon} \,.
\end{align*}
Then we have 
\begin{align}
	\EE{\sum_{t=1}^T \bOne{ j=\argmax_{k \in S_i(t)}\theta_{i,k}(t) ,j'\in S_i(t) }}\notag 
	 \le & \EE{\sum_{t=1}^T \bOne{\cA_{i,j}(t)}}
	 + \EE{\sum_{t=1}^T \bOne{ \urcorner \cA_{i,j}(t),  \cB_{i,j,j'}(t), j =\argmax_{k \in S_i(t)}\theta_{i,k}(t)  } }\notag \\
	 &+ \EE{\sum_{t=1}^T \bOne{ \urcorner \cB_{i,j,j'}(t), j =\argmax_{k \in S_i(t)}\theta_{i,k}(t), j'\in S_i(t) }} \,.\label{eq:decen:index}
\end{align}
Next we will sequentially bound these terms.

\paragraph{The first term in \eqref{eq:decen:index}. }
\begin{align}
	\EE{\sum_{t=1}^T  \bOne{\cA_{i,j}(t)}} \le& \EE{\sum_{t=1}^T \bOne{\abs{ \theta_{i,j}(t) - \hat{\mu}_{i,j}(t)} > \sqrt{\frac{3\log T}{2N_{i,j}(t)}} \text{ or }\abs{ \hat{\mu}_{i,j}(t) - {\mu}_{i,j}} > \sqrt{\frac{3\log T}{2N_{i,j}(t)}}}} \notag \\
	\le&  \sum_{t=1}^T \sum_{w=1}^{T-1}\PP{N_{i,j}(t)=w, \abs{\theta_{i,j}(t) - \hat{\mu}_{i,j}(t)} > \sqrt{\frac{3\log T}{2N_{i,j}(t)}}} \notag \\
	&+\sum_{t=1}^T \sum_{w=1}^{T-1}\PP{N_{i,j}(t)=w, \abs{\hat{\mu}_{i,j}(t) - {\mu}_{i,j}} > \sqrt{\frac{3\log T}{2N_{i,j}(t)}}} 
	\notag\\
	=& \sum_{t=1}^T \sum_{w=1}^{T-1}\PP{N_{i,j}(t)=w} \cdot \left( \PP{\abs{\theta_{i,j}(t) - \hat{\mu}_{i,j}(t)} > \sqrt{\frac{3\log T}{2N_{i,j}(t)}} \mid N_{i,j}(t)=w} \right. \notag\\
	&\left. ~~~~~~~~~~~~~~~~~~~~~~~~~~~~~~~~~~~~~~~~+\PP{\abs{\hat{\mu}_{i,j}(t) - {\mu}_{i,j}} > \sqrt{\frac{3\log T}{2N_{i,j}(t)}} \mid N_{i,j}(t)=w} \right) \notag \\
	\le& \sum_{t=1}^T \sum_{w=1}^{T-1}\PP{N_{i,j}(t)=w} \cdot 4\exp\bracket{-3\log T}\label{eq:decen:beta_concen} \\
	\le& \sum_{t=1}^T \frac{4}{T} = 4 \notag\,,
\end{align}
where \eqref{eq:decen:beta_concen} comes from the result in Lemma \ref{lem:beta:concen} and Lemma \ref{lem:chernoff}.


\paragraph{The second term in \eqref{eq:decen:index}. } For this term, we claim that $\set{ \urcorner \cA_{i,j}(t), \cB_{i,j,j'}(t) } \Rightarrow  \set{N_{i,j}(t) \le \frac{6\log T}{(\Delta_{i,j,j'} - 2\varepsilon)^2} }$. 
Next we will prove this claim by contradiction. 

Suppose $N_{i,j}(t) > \frac{6\log T}{(\Delta_{i,j,j'} - 2\varepsilon)^2}$, then 
\begin{align*}
	\abs{\theta_{i,j}(t) - \mu_{i,j}}  \le \sqrt{\frac{6\log T}{N_{i,j}(t)}} <  \Delta_{i,j,j'} - 2\varepsilon \,,
\end{align*}
where the first inequality comes from event $\urcorner \cA_{i,j}(t)$. This result contradicts the event $\cB_{i,j,j'}(t)$, thus the claim has been proved. 

Above all, for this term, 
\begin{align*}
	\EE{\sum_{t=1}^T \bOne{ \urcorner \cA_{i,j}(t), \cB_{i,j,j'}(t), j =\argmax_{k \in S_i(t)}\theta_{i,k}(t)}}  &\le \EE{\sum_{t=1}^{T}\bOne{j= \argmax_{k\in S_i(t)} \theta_{i,k}(t), N_{i,j}(t)\le \frac{6\log T}{(\Delta_{i,j,j'} - 2\varepsilon)^2} }}\\
	&\le \frac{1}{\rho} \EE{\sum_{t=1}^{T}\bOne{j= \bar{A}_i(t), N_{i,j}(t)\le \frac{6\log T}{(\Delta_{i,j,j'} - 2\varepsilon)^2} }} \\
	&\le \frac{1}{\rho}\frac{6\log T}{(\Delta_{i,j,j'} - 2\varepsilon)^2} \,,
\end{align*}
where the second inequality is because when $j = \argmax_{k \in S_i(t)}\theta_{i,k}(t)$, if player $p_i$ selects arm $a_j$ and all of other players select the arm attempted in the last round, then $j= \bar{A}_i(t)$. This sufficient condition for $\set{j= \bar{A}_i(t)}$ holds with probability larger than $\rho = \lambda^{N-1}(1-\lambda)$. The last inequality holds since when $j= \bar{A}_i(t)$, $N_{i,j}(t+1) =N_{i,j}(t)+1$. 

\paragraph{The third term in \eqref{eq:decen:index}.} 
Define event $\cF_{i,j',1}(t)$ as 
\begin{align}
    \cF_{i,j',1}(t) = &\left\{\forall \theta'_i \text{ satisfying } \abs{\theta'_{i,j'}-\mu_{i,j'}}\le \varepsilon  \text{ and } \theta'_{i,j}=\theta_{i,j}(t) \text{ for any } j \neq j',\text{ it holds that for any $k \in S_i(t),k \neq j'$ }, \theta'_{i,j'} > \theta'_{i,k}. \right. \notag \\
    &\left. \text{ Further, if we replace $\theta(t)$ with }\theta' = (\theta'_i, (\theta_{i'}(t))_{i'\neq i}), \text{ then }\bar{A}_i(t)=j' \text{ with probability larger than }\rho \right\}\,,\label{eq:F1}
\end{align}
and event $\cF_{i,j',2}(t)$ as
\begin{align}
	\cF_{i,j',2}(t) = \set{ \abs{\theta_{i,j'}(t) - \mu_{i,j'}} > \varepsilon }\,. \label{eq:F2}
\end{align}

We claim that if event $\set{\urcorner \cB_{i,j,j'}(t), j =\argmax_{k \in S_i(t)}\theta_{i,k}(t), j'\in S_i(t)}$ holds, then $\cF_{i,j',1}(t)$ and $\cF_{i,j',2}(t)$ hold. 
 
We first consider $\cF_{i,j',1}(t)$. According to $\urcorner \cB_{i,j,j'}(t)$, for any $\theta'_i$ satisfying the requirements in $\cF_{i,j',1}(t)$, there is
\begin{align*}
	\theta'_{i,j} = \theta_{i,j}(t) &\le \mu_{i,j} + \Delta_{i,j,j'} - 2\varepsilon = \mu_{i,j'}  - 2\varepsilon \le \theta'_{i,j'} + \varepsilon-2\varepsilon  < \theta'_{i,j'} \,.
\end{align*}
And for any other arm $k\in S_i(t), k \neq j,j'$, we have $\theta'_{i,k} =\theta_{i,k}(t)  \le \theta_{i,j}(t) = \theta'_{i,j}<\theta'_{i,j'}$. Further, if we replace $\theta(t)$ with $\theta' = (\theta'_i, (\theta_{i'}(t))_{i'\neq i})$, then with probability $1-\lambda$, player $p_i$ attempts to pull arm $j' = \argmax_{k}\theta'_{i,k}$. If all of other players selects the arm attempted in the last round, which happens with probability $\lambda^{N-1}$, then player $p_i$ will succeed. Above all, we have $\bar{A}_i(t)=j'$ with probability at least $\rho = (1-\lambda)\lambda^{N-1}$. 
Thus we conclude 
\begin{align*}
\set{\urcorner \cB_{i,j,j'}(t), j =\argmax_{k \in S_i(t)}\theta_{i,k}(t), j' \in S_i(t) }\Rightarrow \cF_{i,j',1}(t)\,.
\end{align*}

We then consider $\cF_{i,j',2}(t)$. By contradiction, suppose $\urcorner \cF_{i,j',2}(t) = \set{ \abs{\theta_{i,j'}(t) - \mu_{i,j'}} \le \varepsilon } $ holds, then $\theta_i(t)$ satisfies the requirment for $\theta'_i$ in $\cF_{i,j',1}(t)$. According to $\cF_{i,j',1}(t)$, we would have $\theta_{i,j'}(t) > \theta_{i,j}(t)$, which contradicts $j =\argmax_{k \in S_i(t)}\theta_{i,k}(t)$. Thus $\set{\urcorner \cB_{i,j,j'}(t), j =\argmax_{k \in S_i(t)}\theta_{i,k}(t) ,j' \in S_i(t) }\Rightarrow \cF_{i,j',2}(t)$.

Above all, the claim has been proved.

We further introduce some notations used in the following analysis. 
Let $\tau_{\ell}$ be the round at which $\cF_{i,j',1}(t)\land \neg \cF_{i,j',2}(t)$ occurs for the $\ell$-th time. Denote $\rho(t)$ as the exact probability that $\bar{A}_{i}(t)=j'$. 
We construct a new counter $N'_{i,j'}$. At round $t$, if $\cF_{i,j',1}(t)\land \neg \cF_{i,j',2}(t)$ occurs and $\bar{A}_{i}(t)=j'$, we update $N'_{i,j'}(t+1)=N'_{i,j}(t)+1$ with probability $\rho/\rho(t)$. Otherwise $N'_{i,j'}(t+1)=N'_{i,j'}(t)$. Since when $\cF_{i,j',1}(t)\land \neg \cF_{i,j',2}(t)$ occurs, $\bar{A}_{i}(t)=j'$ holds with probability larger than $\rho$, we have $\rho/\rho(t)<1$. 
And $\PP{N'_{i,j}(t+1)\neq N'_{i,j}(t)\mid \cF_{i,j',1}(t)\land \neg \cF_{i,j',2}(t)} = \rho(t)\cdot \frac{\rho}{\rho(t)} = \rho$. Let $\eta_q$ be the round $t$ at which $N'_{i,j'}(t+1)=q = N'_{i,j'}(t)+1$
and $\eta_0 = \tau_0 = 0$. 

Above all, for the third term in \eqref{eq:decen:index}, 
\begin{align}
	&\EE{\sum_{t=1}^T \bOne{ \urcorner \cB_{i,j,j'}(t), j =\argmax_{k \in S_i(t)}\theta_{i,k}(t), j' \in S_i(t) }} \notag \\
	\le &\EE{\sum_{t=1}^T \bOne{\cF_{i,j',1}(t) \wedge \cF_{i,j',2}(t)}} \notag \\
	\le& \sum_{q\ge 0} \EE{\sum_{t=\eta_{q}+1}^{\eta_{q+1}}\bOne{\cF_{i,j',1}(t) \wedge \cF_{i,j',2}(t)}}  \notag \\
	=& \sum_{q\ge 0}\left\{ \sum_{\ell_1 \ge 0}^{\infty} \PP{\eta_{q} = \tau_{\ell_1}} \times \sum_{\ell_2 \ge 0}^{\infty} \PP{\eta_{q+1} = \tau_{\ell_1+\ell_2+1} \mid \eta_{q} = \tau_{\ell_1}} \times \sum_{\ell=\ell_1 }^{\ell_1 +\ell_2} \EE{\sum_{t=\tau_{\ell}+1}^{\tau_{\ell+1}}\bOne{\cF_{i,j',1}(t) \wedge \cF_{i,j',2}(t) \mid  \eta_{q}\le \tau_{\ell}<\eta_{q+1} } }\right\} \notag \\
	\le & \sum_{q\ge 0}\left\{ \sum_{\ell_1 \ge 0}^{\infty} \PP{\eta_{q} = \tau_{\ell_1}} \times \sum_{\ell_2 \ge 0}^{\infty} \rho(1-\rho)^{\ell_2} 
	 \times \sum_{\ell=\ell_1 }^{\ell_1 +\ell_2}  \set{\EE{\sup_{t\ge \eta_{q}+1} \frac{1}{\PP{\abs{\theta_{i,j'}(t)-\mu_{i,j'}}\le \varepsilon \mid \cH_{t}}}  }-1} 
	 \right\} \label{eq:since:lem:TSexponential} \\
	\le & \sum_{q\ge 0}\left\{ \sum_{\ell_1 \ge 0}^{\infty} \PP{\eta_{q} = \tau_{\ell_1}} \times \sum_{\ell_2 \ge 0}^{\infty}\ell_2 \rho(1-\rho)^{\ell_2}\times 
	   \set{\EE{\sup_{t\ge \eta_{q}+1} \frac{1}{\PP{\abs{\theta_{i,j'}(t)-\mu_{i,j'}}\le \varepsilon \mid \cH_{t}}}  }-1} 
	 \right\} \notag  \\
	\le& \frac{1}{\rho}\cdot \sum_{q\ge 0} \set{\EE{\sup_{t\ge \eta_{q}+1} \frac{1}{\PP{\abs{\theta_{i,j'}(t)-\mu_{i,j'}}\le \varepsilon \mid \cH_{t}}}  }-1} \label{eq:since:rho} \\
	\le& \frac{1}{\rho} \bracket{\sum_{q=0}^{\lceil8/\varepsilon^2\rceil-1} \bracket{c\varepsilon^{-4}} + \sum_{q \ge \lceil 8/\varepsilon^2\rceil} e^{-\varepsilon^2 q /8}\bracket{ c'\varepsilon^{-4} } } \le \frac{C}{\rho \varepsilon^6}\,, \label{eq:since:techsum}
\end{align}
where 
\eqref{eq:since:lem:TSexponential} comes from the result in Lemma \ref{lem:TS:exponential} and $\cH_{t}$ is the history of observations at time $t$, \eqref{eq:since:rho} holds by computing the expectation for Geometrically distributed random variable and $\sum_{\ell_1 \ge 0}\PP{\eta_q = \tau_{\ell_1}} = 1$, \eqref{eq:since:techsum} comes from the result in Lemma \ref{fact:nips20}. Here $C$ is a universal constant.

Combining all these three terms in \eqref{eq:decen:index}, we have proved the result in Lemma \ref{lem:decen:TSpart}, 
\begin{align*}
\EE{\sum_{t=1}^T \bOne{\urcorner \cE_t}} &\le \sum_{i,j,j': \mu_{i,j}<\mu_{i,j'} } \bracket{ 4+\frac{6\log T}{\rho(\Delta_{i,j,j'}-2\varepsilon)^2} + \frac{C}{\rho}\cdot \frac{1}{\varepsilon^6} } 
	\le NK^2 \bracket{ 4+\frac{6\log T}{\rho(\Delta-2\varepsilon)^2} + \frac{C}{\rho}\cdot\frac{1}{\varepsilon^6} } \,.
\end{align*}
\end{proof}

\begin{lemma}(bound $\PP{\cap_{s=0}^{h_t} \bracket{\cE_{t-s} \cap \set{A(t-s-1) \notin M^*}}}$ in \eqref{eq:decen:boundall}) \label{lem:decen:matchingpart}
According to \cite[Lemma 14]{liu2020bandit}, we have 
\begin{align*}
\PP{\cap_{s=0}^{h_t} \bracket{\cE_{t-s} \cap \set{A(t-s-1) \notin M^*}}} \le \bracket{1-\rho^{N^4}}^{\lfloor h_t/N^4 \rfloor} \,.
\end{align*}
\end{lemma}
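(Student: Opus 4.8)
\textbf{Proof proposal for Lemma~\ref{lem:decen:matchingpart}.}
The statement to be proved is that the probability of the event $\bigcap_{s=0}^{h_t}\bigl(\cE_{t-s}\cap\set{A(t-s-1)\notin M^*}\bigr)$ is at most $(1-\rho^{N^4})^{\lfloor h_t/N^4\rfloor}$. The plan is to reduce this to \cite[Lemma~14]{liu2020bandit}, which concerns the number of rounds a decentralized matching market with ``good'' estimates takes to reach a stable matching. The key conceptual point is that on the event $\cap_{s=0}^{h_t}\cE_{t-s}$, every player's selected arm within the plausible set coincides with that player's genuinely most-preferred arm in the plausible set; that is, the behavior of all players over the window $[t-h_t,\,t]$ is \emph{exactly} the behavior of players with perfectly ranked preferences over their plausible sets, so the dynamics of $A(\cdot)$ over this window is identical in law to the ``ideal'' dynamics studied in \cite{liu2020bandit}. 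I would make this substitution explicit: condition on the $\sigma$-field generated by the observation history and sampled parameters up to round $t-h_t-1$, and note that on $\cap_{s=0}^{h_t}\cE_{t-s}$ the arm choices are determined (up to the independent random-delay coins $D_i(\cdot)$ and the conflict resolution) by the plausible-set structure alone.

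First, I would recall the structure of the argument in \cite[Lemma~14]{liu2020bandit}: within any block of $N^4$ consecutive rounds, if the current matching is unstable and all players are acting greedily on their plausible sets, there is a deterministic sequence of delay-coin outcomes — of the form ``one designated player plays greedily, all others repeat the previous round'' iterated along a blocking chain — that drives the market to a stable matching within $N^4$ rounds; this sequence occurs with probability at least $\rho^{N^4}$, since each of the $N^4$ rounds requires a particular configuration of the $N$ independent $\mathrm{Bernoulli}(\lambda)$ draws, each configuration having probability at least $\lambda^{N-1}(1-\lambda)=\rho$ (one player with $D_i=0$, the rest with $D_{i'}=1$). Second, I would partition the window $\{t-h_t,\dots,t-1\}$ into $\lfloor h_t/N^4\rfloor$ disjoint blocks of length $N^4$. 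For the event in the lemma to hold, the matching must be unstable at the end of each block (since $A(t-s-1)\notin M^*$ for all $s\le h_t$, in particular at the block boundaries), and on $\cap\cE$ the players act greedily throughout; hence in each block the market failed to stabilize, an event of probability at most $1-\rho^{N^4}$ conditionally on the past. Third, I would chain these conditional bounds: using the Markov-type property that the block-$k$ failure probability is at most $1-\rho^{N^4}$ given everything before block $k$ (because the delay coins are drawn fresh and independently each round, and $\cap\cE$ only restricts the sampled $\theta$'s, not the coins), the tower rule gives the product bound $(1-\rho^{N^4})^{\lfloor h_t/N^4\rfloor}$.

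The step I expect to be the main obstacle is the careful justification that the event $\cap_{s}\cE_{t-s}$ does not interfere with the lower bound $\rho^{N^4}$ on the stabilizing delay-coin sequence within each block — i.e., that conditioning on the $\theta$-estimates being ``accurate on the plausible set'' leaves the $D_i(\cdot)$ coins with their original product-Bernoulli law. This is true because $\theta_{i,j}(t)$ and $D_i(t)$ are drawn independently in Algorithm~\ref{alg:CA-TS}, so $\cE_{t-s}$ is measurable with respect to the $\theta$'s (and the history) and independent of the current-round coins; but one must also verify that the \emph{plausible sets} $S_i(\cdot)$ evolving over the block are consistent with the blocking-chain argument of \cite{liu2020bandit}, namely that a player who was rejected is correctly excluded from the rejecting arm and that a greedy player never gets rejected by an arm in its plausible set when all others repeat — which is precisely the invariant established in the proof of Lemma~\ref{lem:generalize_preserve}. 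Once this alignment with the hypotheses of \cite[Lemma~14]{liu2020bandit} is in place, the bound follows directly and I would simply cite it, indicating that the only modification needed relative to their UCB-based analysis is replacing ``UCB indices rank the plausible set correctly'' by ``the event $\cE$ holds,'' which plays the identical role.
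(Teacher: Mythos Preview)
Your proposal is correct and takes the same approach as the paper: both reduce directly to \cite[Lemma~14]{liu2020bandit}. In fact the paper offers no proof beyond the bare citation, whereas you spell out why the reduction is valid --- namely that on $\cap_{s}\cE_{t-s}$ the dynamics coincide with the ideal greedy-on-plausible-set dynamics, that the delay coins $D_i(\cdot)$ are independent of the $\theta$-samples so conditioning on $\cE$ does not perturb the per-block stabilization probability $\rho^{N^4}$, and that chaining over $\lfloor h_t/N^4\rfloor$ disjoint blocks yields the product bound --- all of which is exactly the content of the cited lemma adapted from UCB indices to the event $\cE$.
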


\begin{lemma}\label{lem:TS:exponential}
    Fix player $p_i$ and arm $a_{j}$. Events $\cF_{i,j,1}(t)$ and $\cF_{i,j,2}(t)$ are defined as in \eqref{eq:F1},\eqref{eq:F2}. 
    Let $\tau_{\ell}$ be the round $t$ at which $\cF_{i,j,1}(t)\land \neg \cF_{i,j,2}(t)$ occurs for the $\ell$-th time, let $\tau_{0}=0$. Then for Algorithm \ref{alg:CA-TS}, we have
    \begin{align*}
        \EE{\sum_{t=\tau_{\ell}+1}^{\tau_{\ell+1}}\bOne{\cF_{i,j,1}(t)\wedge  \cF_{i,j,2}(t)}} \le \EE{\sup_{t\ge \tau_{\ell}+1} \frac{1}{\PP{\abs{\theta_{i,j}(t)-\mu_{i,j}}\le \varepsilon \mid \cH_{t}}}  } - 1 \,,
    \end{align*} 
    where $\cH_{t}$ is the history of observations at time $t$. 
\end{lemma}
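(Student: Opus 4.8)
The plan is to recognize the left‑hand side as the expected number of ``failures before the first success'' in a sequence of Bernoulli‑type trials — one per round on which $\cF_{i,j,1}$ holds — whose success probabilities are exactly $p_t:=\PP{\abs{\theta_{i,j}(t)-\mu_{i,j}}\le\varepsilon\mid\cH_t}$, and then to bound this expectation by $\EE{\sup_{t\ge\tau_\ell+1}1/p_t}-1$ through a martingale argument in the style of the standard Thompson‑sampling analyses (e.g.\ \cite{agrawal2013further,perrault2020statistical}).

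I would first set up the combinatorial picture. By definition $\tau_{\ell+1}$ is the first round after $\tau_\ell$ at which $\cF_{i,j,1}$ holds while $\cF_{i,j,2}$ fails; hence every round $t$ with $\tau_\ell<t<\tau_{\ell+1}$ on which $\cF_{i,j,1}(t)$ holds must also satisfy $\cF_{i,j,2}(t)$, and the summand vanishes at $t=\tau_{\ell+1}$. Calling a round $t>\tau_\ell$ \emph{relevant} when $\cF_{i,j,1}(t)$ holds and letting $t_1<t_2<\cdots$ list the relevant rounds, the left‑hand side equals $\sigma-1$, where $t_\sigma:=\tau_{\ell+1}$ is the first relevant round at which $\cF_{i,j,2}$ fails. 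The key probabilistic input is that $\cF_{i,j,1}(t)$ does not involve the realized sample $\theta_{i,j}(t)$ — it is quantified over hypothetical values of the $j$‑th coordinate — so conditionally on $\cH_t$ it is independent of $\theta_{i,j}(t)$, which is drawn independently of the other round‑$t$ samples and of the delay variables, whereas $\cF_{i,j,2}(t)=\set{\abs{\theta_{i,j}(t)-\mu_{i,j}}>\varepsilon}$ depends on $\theta_{i,j}(t)$ alone. Exposing the randomness round by round, at the $m$‑th relevant round $t_m$ the ``success'' event $\neg\cF_{i,j,2}(t_m)$ thus has conditional probability exactly $p_{t_m}$ given the natural filtration $\mathcal{G}_m:=\cH_{t_m+1}$.

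Next I would form the nonnegative martingale $M_n:=\prod_{m=1}^{n}\bOne{\cF_{i,j,2}(t_m)}/(1-p_{t_m})$ (frozen once no $n$‑th relevant round exists), with $M_0=1$, and stop it at $\sigma$. Since $M_{n\wedge\sigma}=0$ once $\sigma\le n$ and $M_{n\wedge\sigma}=\prod_{m\le n}(1-p_{t_m})^{-1}$ on $\set{\sigma>n}$, optional stopping gives $\EE{\bOne{\sigma>n}\prod_{m\le n}(1-p_{t_m})^{-1}}=1$, and bounding $\prod_{m\le n}(1-p_{t_m})^{-1}\ge(1-\hat p)^{-n}$ with $\hat p:=\inf_{t\ge\tau_\ell+1}p_t$ controls $\PP{\sigma>n}$. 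Summing over $n\ge1$ via $\sigma-1=\sum_{n\ge1}\bOne{\sigma>n}$ and $\sum_{n\ge1}(1-\hat p)^n=\hat p^{-1}-1$ then yields $\EE{\sigma-1}\le\EE{\hat p^{-1}}-1=\EE{\sup_{t\ge\tau_\ell+1}1/p_t}-1$, which, combined with the combinatorial identification of the first paragraph, is the claim.

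\textbf{Main obstacle.} The delicate step is the last one. Unlike in classical single‑agent Thompson sampling, here the number of observations of $\mu_{i,j}$ can still grow inside the block $(\tau_\ell,\tau_{\ell+1}]$, so the posterior — and with it $p_t$ — is genuinely non‑monotone in $t$; consequently the comparison quantity $\hat p$ (equivalently $\sup_{t\ge\tau_\ell+1}1/p_t$) is a random variable and must be kept under a single outer expectation throughout, so one cannot simply invoke a Geometric‑distribution bound with a deterministic parameter. Passing from the per‑$n$ martingale identity to the summed bound $\EE{\sigma-1}\le\EE{\hat p^{-1}}-1$ therefore requires a careful, $n$‑uniform argument, together with the (mild) integrability check needed for optional stopping and the harmless truncation at the horizon $T$. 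The combinatorial reduction and the conditional‑independence bookkeeping are by contrast routine.
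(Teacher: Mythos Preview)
Your approach mirrors the paper's exactly: both reduce the left-hand side to the expected number of failures before the first success in a time-varying geometric sequence indexed by the rounds on which $\cF_{i,j,1}$ holds, use the conditional independence of $\cF_{i,j,1}(t)$ and $\cF_{i,j,2}(t)$ given $\cH_t$ to identify the per-trial success probability as $p_t=\PP{|\theta_{i,j}(t)-\mu_{i,j}|\le\varepsilon\mid\cH_t}$, and then bound by the worst-case $\hat p=\inf_{t\ge\tau_\ell+1}p_t$. The paper's write-up is terser---it just appeals to ``monotonicity of the expectation'' for the last step---whereas you attempt an explicit martingale argument and correctly flag the final passage as the main obstacle.

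That said, your proposed route through $M_n=\prod_{m\le n}\bOne{\cF_{i,j,2}(t_m)}/(1-p_{t_m})$ does not close the gap as written. Optional stopping gives $\EE{\bOne{\sigma>n}\prod_{m\le n}(1-p_{t_m})^{-1}}=1$, and lower-bounding the product by $(1-\hat p)^{-n}$ yields only $\EE{\bOne{\sigma>n}(1-\hat p)^{-n}}\le 1$. Because $\hat p$ is path-dependent (it looks at the entire trajectory), you cannot extract $\PP{\sigma>n}\le\EE{(1-\hat p)^n}$ from this, and summing the inequality over $n$ just gives $\infty\le\infty$; no ``$n$-uniform'' refinement of this particular inequality will deliver $\EE{\sigma-1}\le\EE{1/\hat p}-1$. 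A clean fix is to use a Wald-type martingale instead: set $W_n=\sum_{m=1}^{n\wedge\sigma}\bigl(1-p_{t_m}^{-1}\bOne{\neg\cF_{i,j,2}(t_m)}\bigr)$, which is a mean-zero martingale by the same conditional-independence reasoning. Optional stopping at $\sigma$ (with the harmless horizon-$T$ truncation you mention) yields the exact identity $\EE{\sigma}=\EE{1/p_{t_\sigma}}$, and since $p_{t_\sigma}\ge\hat p$ the claimed bound follows immediately.
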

\begin{proof}
    Note that the event $\cF_{i,j,1}(t)$ and $\cF_{i,j,2}(t)$ are independent conditioned on history $\cH_{t}$. 
    Let $\tau_{\ell,\ell'}$ be the round $t$ at which the event $\cF_{i,j,1}(t)$ happens for the $\ell'$-th time after round $\tau_{\ell}+1$. 
    Then, 
    \begin{align*}
        \EE{\sum_{t=\tau_{\ell}+1}^{\tau_{\ell+1}}\bOne{\cF_{i,j,1}(t)\wedge \cF_{i,j,2}(t)}}  = \EE{\sum_{\ell'\ge 1}(\ell'-1)\PP{\neg \cF_{i,j,2}(\tau_{\ell,\ell'}) \mid \cH_{\tau_{\ell,\ell'}}} \prod_{\ell''=1}^{\ell'-1}\PP{ \cF_{i,j,2}(\tau_{\ell,\ell''}) \mid \cH_{\tau_{\ell,\ell''}} }}\,,
    \end{align*}
    
We observe that the RHS of the above equality is the expectation of a time-varying geometric distribution with the success probability of the $\ell''$-th trial being $\PP{\neg \cF_{i,j,2}(\tau_{\ell,\ell''}) \mid \cH_{\tau_{\ell,\ell''}}  }$. 
According to the definition in \eqref{eq:F2}, this success probability satisfies 
    \begin{align*}
        \inf_{t \ge \tau_{\ell}+1} \PP{\neg \cF_{i,j,2}(t) \mid \cH_{t} } = \inf_{t \ge \tau_{\ell}+1}  \PP{\abs{ \theta_{i,j}(t)-\mu_{i,j} }\le \varepsilon \mid \cH_{t} }\,.
    \end{align*}
    Then according to the monotonicity of the expectation, it can be upper bounded by
    \begin{align*}
        &\EE{\sum_{\ell'\ge 1}(\ell'-1)\PP{\neg \cF_{i,j,2}(\tau_{\ell,\ell'}) \mid \cH_{\tau_{\ell,\ell'}}} \prod_{\ell''=1}^{\ell'-1}\PP{ \cF_{i,j,2}(\tau_{\ell,\ell''}) \mid \cH_{\tau_{\ell,\ell''}} }}\\
         \le &\EE{\sup_{t \ge \tau_{\ell}+1} \frac{1}{ \PP{\abs{ \theta_{i,j}(t)-\mu_{i,j} }\le \varepsilon \mid \cH_{t} }} }   -1 \,.
    \end{align*}

\end{proof}

\section{CA-TS with Gaussian Priors}
\label{sec:CA-TS:gaussian}

For completeness, we present the algorithm of CA-TS with Gaussian priors in Algorithm \ref{alg:CA-TS-gaussian}. The stable regret upper bound for this algorithm is shown in the following Theorem \ref{thm:bound:gau}, which has the same order as Theorem \ref{thm:bound} with only difference on the constant coefficients.

\begin{algorithm}[thb!]
    \caption{CA-TS with Gaussian priors}\label{alg:CA-TS-gaussian}
    \begin{algorithmic}[1]
    \STATE Input: player set $\cN$, arm set $\cK$, parameter $\lambda \in (0,1),$ \label{alggau:cats:input}
    \STATE Initialize: let each player $p_i$ play each arm $a_j$ once and set $\hat{\mu}_{i,j}=X_{i,j}, N_{i,j}=1$ \label{alggau:cats:initial}
    \FOR{$t=1,2,\cdots$}
        \FOR{$p_i \in \cN$}
        \STATE $\forall a_j:$ sample $\theta_{i,j}(t)\sim \cN(\hat{\mu}_{i,j},1/N_{i,j})$ \label{alggau:cats:sample}
            \STATE Independently draw $D_{i}(t)\sim\mathrm{Bernoulli}(\lambda)$ \label{alggau:cats:draw}
            \IF{$D_{i}(t)==0$} \label{alggau:cats:select:start}
                \STATE Construct plausible set $S_{i}(t)$ for player $p_i$ \label{alggau:cats:plausible}
                \begin{align*}
                    S_{i}(t) = \set{j: \pi_{j,i}\ge \pi_{j,i'} \text{ where } \bar{A}_{i'}(t-1)=j  }
                \end{align*}
                \STATE Pull $A_{i}(t) \in \argmax_{j \in S_{i}(t)}\theta_{i,j}(t)$ \label{alggau:cats:select:end}
            \ELSE  
                \STATE Pull $A_{i}(t) = A_{i}(t-1)$ \label{alggau:cats:delay}
            \ENDIF
            \IF{$p_i$ wins conflict} \label{alggau:cats:update:start}
                \STATE $\bar{A}_{i}(t) = A_{i}(t)$ \label{alggau:cats:mark}
                \STATE $\hat{\mu}_{i,A_i(t)} = \frac{\hat{\mu}_{i,A_{i}(t)}\cdot N_{i,A_{i}(t)} +  X_{i,A_{i}(t)}(t)}{N_{i,{A}_{i}(t)}+1} $ 
                \STATE $N_{i,A_{i}(t)} = N_{i,A_{i}(t)}+1$
            \ENDIF \label{alggau:cats:update:end}
        \ENDFOR
    \ENDFOR
    \end{algorithmic}
\end{algorithm}

\begin{theorem}\label{thm:bound:gau}
Let $\rho = \lambda^{N-1}(1-\lambda)$ and the rewards in each round are $1$-subgaussian random variables. Following Algorithm \ref{alg:CA-TS-gaussian}, the cumulative stable regret of each player $p_i$ satisfies

\begin{align}
R_i(T) 
\le &  \left\{ \frac{2N^5K^2\log T}{\rho^{N^4}} \bracket{ 4+\frac{8\log T}{\rho (\Delta-2\varepsilon)^2} + \frac{C}{\rho\varepsilon^6} }   + 6 + \frac{6N^4}{\rho^{N^4}} \right\}\cdot \Delta_{i,\max} \label{bound:ful:gau}\\
=&O\bracket{ \frac{N^5K^2 \log^2 T}{\rho^{N^4}\Delta^2}\cdot \Delta_{i,\max} } \label{bound:order:gau}\,,
\end{align}
for any $\varepsilon$ such that $\Delta - 2\varepsilon >0$, where $C$ is a universal constant and the maximum stable regret $\Delta_{i,\max}$ is re-defined as $\max_{j \in [K]}\max\set{\mu_{i,m_i}-\mu_{i,j}, \mu_{i,m_i}}$.
\end{theorem}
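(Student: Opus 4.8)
The plan is to mirror the proof of Theorem~\ref{thm:bound} almost line for line, changing only the two ingredients where the Beta prior actually enters. First I would re-establish the reduction $R_i(T)\le\Delta_{i,\max}\cdot\EE{\sum_{t=1}^T\bOne{A(t)\notin M^*}}$: on a round with $A(t)\in M^*$ player $p_i$ is matched with (and wins) an arm at least as good as $m_i$ by definition of $m_i$, so the instantaneous regret $\mu_{i,m_i}-\EE{X_{i,A_i(t)}(t)}$ is non-positive; on an unstable round it is at most $\mu_{i,m_i}-\mu_{i,A_i(t)}$ if $p_i$ wins and $\mu_{i,m_i}$ if $p_i$ is rejected and collects $0$. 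Since the rewards are no longer confined to $[0,1]$, this is exactly what forces the re-definition $\Delta_{i,\max}=\max_{j\in[K]}\max\set{\mu_{i,m_i}-\mu_{i,j},\mu_{i,m_i}}$ stated in the theorem. From there Lemma~\ref{lem:generalize_preserve} is purely combinatorial and applies verbatim, producing the decomposition \eqref{eq:introduceht}--\eqref{eq:decen:boundall}, whose second term is controlled by Lemma~\ref{lem:decen:matchingpart} (= \cite[Lemma 14]{liu2020bandit}), which does not depend on the prior at all. Hence the whole theorem reduces to the Gaussian analog of Lemma~\ref{lem:decen:TSpart}, namely $\EE{\sum_{t=1}^T\bOne{\urcorner\cE_t}}\le NK^2\bracket{4+\tfrac{8\log T}{\rho(\Delta-2\varepsilon)^2}+\tfrac{C}{\rho\varepsilon^6}}$; once this is in hand, substituting $h_t=\min\set{t,2\lceil N^4\log T/\rho^{N^4}\rceil}-1$ into \eqref{eq:decen:boundall} reproduces \eqref{bound:ful:gau} by the same geometric-sum estimate as in the Beta case.

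For that analog I would keep the same three-way split of the event $\set{j=\argmax_{k\in S_i(t)}\theta_{i,k}(t),\,j'\in S_i(t)}$ into $\cA_{i,j}(t)$, $\set{\urcorner\cB_{i,j,j'}(t),\,j=\argmax}$, and $\set{\urcorner\cA_{i,j}(t),\,\cB_{i,j,j'}(t),\,j=\argmax}$, but now with the slightly wider confidence radius $\cA_{i,j}(t)=\set{\abs{\theta_{i,j}(t)-\mu_{i,j}}>\sqrt{8\log T/N_{i,j}(t)}}$; note $N_{i,j}(t)\ge1$ throughout because every arm is pulled once in the initialization (line~\ref{alggau:cats:initial} of Algorithm~\ref{alg:CA-TS-gaussian}), so the Gaussian posterior $\cN(\hat\mu_{i,j}(t),1/N_{i,j}(t))$ is always well defined. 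For the first term, split $\cA_{i,j}(t)$ into $\set{\abs{\theta_{i,j}(t)-\hat{\mu}_{i,j}(t)}>\sqrt{2\log T/N_{i,j}(t)}}$ and $\set{\abs{\hat{\mu}_{i,j}(t)-\mu_{i,j}}>\sqrt{2\log T/N_{i,j}(t)}}$; conditionally on $N_{i,j}(t)=w$, the first has probability $\PP{\abs{\cN(0,1/w)}>\sqrt{2\log T/w}}\le 2/T$ because $\theta_{i,j}(t)\mid\cH_t\sim\cN(\hat{\mu}_{i,j}(t),1/N_{i,j}(t))$, and the second is $\le 2/T$ by the $1$-subgaussian concentration of the empirical mean of $w$ rewards; summing over $t$ and over $w$ against $\PP{N_{i,j}(t)=w}$ gives the constant $4$ (this is where the Beta concentration Lemma used in \eqref{eq:decen:beta_concen} is simply replaced by a Gaussian tail bound). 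For the second term the contradiction argument is unchanged except that $6$ becomes $8$, so $\set{\urcorner\cA_{i,j}(t),\cB_{i,j,j'}(t)}\Rightarrow\set{N_{i,j}(t)\le 8\log T/(\Delta_{i,j,j'}-2\varepsilon)^2}$, and the same ``with probability at least $\rho$ all other players delay, so $p_i$ is matched with $j$ and $N_{i,j}$ increments'' argument multiplies by $1/\rho$, yielding $\tfrac{8\log T}{\rho(\Delta_{i,j,j'}-2\varepsilon)^2}$.

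The third term is the only genuinely prior-specific part, and I expect the Gaussian anti-concentration estimate to be the main obstacle. Structurally it is identical to the Beta case: the events $\cF_{i,j',1}(t),\cF_{i,j',2}(t)$ of \eqref{eq:F1}--\eqref{eq:F2} and the implication $\set{\urcorner\cB_{i,j,j'}(t),j=\argmax,j'\in S_i(t)}\Rightarrow\cF_{i,j',1}(t)\wedge\cF_{i,j',2}(t)$ are algorithmic, not prior-dependent; Lemma~\ref{lem:TS:exponential} is stated for a generic posterior sample $\theta_{i,j'}(t)$ and history $\cH_t$ and so applies directly; and the rescaled counter $N'_{i,j'}$ (incremented with probability $\rho/\rho(t)\le1$ whenever $\cF_{i,j',1}(t)\wedge\urcorner\cF_{i,j',2}(t)$ holds and $p_i$ is matched with $j'$) together with the horizon slicing along it reduces the term to $\tfrac{1}{\rho}\sum_{q\ge0}\bracket{\EE{\sup_{t\ge\eta_q+1}\frac{1}{\PP{\abs{\theta_{i,j'}(t)-\mu_{i,j'}}\le\varepsilon\mid\cH_t}}}-1}$. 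To close this I need the Gaussian replacement of Lemma~\ref{fact:nips20}: a bound $\EE{\sup_{t\ge\eta_q+1}\frac{1}{\PP{\abs{\theta_{i,j'}(t)-\mu_{i,j'}}\le\varepsilon\mid\cH_t}}}\le c\varepsilon^{-4}$ for $q<\lceil8/\varepsilon^2\rceil$ and $\le e^{-\varepsilon^2q/8}c'\varepsilon^{-4}$ for $q\ge\lceil8/\varepsilon^2\rceil$. This follows from standard Gaussian facts: once $q$ pseudo-observations of $\mu_{i,j'}$ have accumulated, $\hat{\mu}_{i,j'}$ is within $\varepsilon/2$ of $\mu_{i,j'}$ except on an event of probability $e^{-\Omega(\varepsilon^2 q)}$, and on that good event a $\cN(\hat{\mu}_{i,j'},1/N)$ draw hits $[\mu_{i,j'}-\varepsilon,\mu_{i,j'}+\varepsilon]$ with probability bounded below by an absolute constant, while off the good event one uses the crude estimate that a Gaussian with variance at most $1$ lands in any interval of width $2\varepsilon$ with probability $\Omega(\varepsilon)$, which produces the $\varepsilon^{-4}$ factor; summing the resulting geometric series over $q$ gives $C/(\rho\varepsilon^6)$. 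Adding the three terms and bounding the number of triples $(i,j,j')$ with $\mu_{i,j}<\mu_{i,j'}$ by $NK^2$ completes the Gaussian analog of Lemma~\ref{lem:decen:TSpart}, and hence Theorem~\ref{thm:bound:gau}. The conceptual subtlety inherited from the Beta analysis --- that observations of $a_{j'}$ are gated not by how often $p_i$ selects it but by whether the other players delay, which is why $N'_{i,j'}$ and the $1/\rho$ factor appear --- is unchanged; the Gaussian-specific work is confined to the concentration bound used in the first term and the anti-concentration bound just described.
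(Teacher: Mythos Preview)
Your proposal is correct and follows essentially the same route as the paper, whose proof of Theorem~\ref{thm:bound:gau} consists only of re-running the proof of Theorem~\ref{thm:bound} with the confidence radius in $\cA_{i,j}(t)$ widened to $\sqrt{8\log T/N_{i,j}(t)}$, with Lemmas~\ref{lem:beta:concen} and~\ref{lem:chernoff} replaced by the Gaussian/subgaussian tail bounds of Lemmas~\ref{fact:gaussian:concen}--\ref{fact:subgaussian:concen} for the first term of \eqref{eq:decen:index}, and with the Gaussian analog of Lemma~\ref{fact:nips20} imported from \cite[Eq.~(4)]{perrault2020statistical} for the third term. The one loose spot in your sketch is the ``off the good event'' crude lower bound $\Omega(\varepsilon)$, which is false as stated (if $\hat\mu_{i,j'}$ is far from $\mu_{i,j'}$ the posterior mass on $[\mu_{i,j'}-\varepsilon,\mu_{i,j'}+\varepsilon]$ can be exponentially small, so one must integrate $1/P$ against the law of $\hat\mu_{i,j'}$ rather than bound it pointwise); but since the paper also outsources exactly this step to a citation, your plan is at the appropriate level of detail.
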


\begin{proof}
The proof of Theorem \ref{thm:bound:gau} is very similar to that of Theorem \ref{thm:bound}. Here we only state the difference with Appendix \ref{sec:proof:dcts}. 

When proving Lemma \ref{lem:decen:TSpart}, we slightly change the constant in the definition of event $\cA_{i,j}(t)$ as 
\begin{align*}
    \cA_{i,j}(t) = \abs{ \theta_{i,j}(t)- \mu{i,j}} > \sqrt{\frac{8\log T}{N_{i,j}(t)}} \,.
\end{align*}
Based on this new definition, we use the result of Lemma \ref{fact:gaussian:concen} and \ref{fact:subgaussian:concen} to get the upper bound for the first term in \eqref{eq:decen:index} instead of Lemma \ref{lem:beta:concen} and \ref{lem:chernoff}. In addition, to get the result of Eq. \eqref{eq:since:techsum}, we use the upper bound of \cite[Eq. (4)]{perrault2020statistical} instead of the result of Lemma \ref{fact:nips20}.

\end{proof}

\section{Additional Experiments}\label{sec:add:exp}
\subsection{Introduction to Baseline Algorithms}\label{sec:baselines}

\paragraph{CA-UCB}  The CA-UCB algorithm proposed by \citeauthor{liu2020bandit} \shortcite{liu2020bandit} is a UCB-type algorithm and maintains a UCB index for each preference value to select arms. We follow the parameter selection of the original paper and set $\lambda=0.1$ for all experiments, same for our CA-TS.

\paragraph{PhasedETC (P-ETC)} The PhasedETC (P-ETC) algorithm proposed by \citeauthor{basu21beyond} \shortcite{basu21beyond} is an ETC-type algorithm, which switches between phases of exploration and exploitation. We set $\epsilon=0.2$ as in the original paper.

\paragraph{Decentralized ETC (D-ETC)} The Decentralized ETC (D-ETC) algorithm proposed by \citeauthor{liu2020competing} \shortcite{liu2020competing} is another ETC-type algorithm requiring hyperparameter $H$ as exploration budget. 
We test different values of $H$ in $\set{50,100,200,300,400}$ in market with global preferences (Section \ref{sec:exp:global}) and find that $H=200$ has the best performance in the cumulative market unstability, thus we fix this value in all experiments.

\paragraph{UCB-D4} The UCB-D4 algorithm proposed by \citeauthor{basu21beyond} \shortcite{basu21beyond} is only designed for markets satisfying the uniqueness consistency, where the stable matching is unique and robust. 
Here the robustness means removing any subset of matched pairs will not make the remaining matching unstable. 
So we only test the performances of UCB-D4 in the market with global preferences (Section \ref{sec:exp:global}), where the assumption of uniqueness consistency can be satisfied. 
We follow the experimental setting in the original paper and set $\beta = 1/(2K), \gamma=2$.

\subsection{Varying Heterogeneity of Players' Preferences}\label{sec:exp:beta}

One may concern that the performances of algorithms could be worse when the preferences of all players tend to be the same, since players may always attempt same arms and thus lead more conflicts. In this section, we follow the experimental setting of \citeauthor{liu2020bandit} \shortcite{liu2020bandit} to test the performances of algorithms in markets with different heterogeneity of players' preferences.

The market size is fixed with $N=5$ players and $K=5$ arms.
The preference rankings for arms are randomly generated while the generation of players' preference follow a random utility model \cite{ashlagi2017communication,liu2020bandit}. 
Specifically, we first sample an independent random variable $x_j \sim \mathrm{Uniform}([0,1])$ for each arm $a_j$ and an independent random variable $\epsilon_{i,j}\sim \mathrm{Logistic}(0,1)$ for each pair of player $p_i$ and arm $a_j$. Let $\bar{\mu}_{i,j} = \beta x_j + \epsilon_{i,j}$, where $\beta$ is a parameter to be determined later. Then the preference value $\mu_{i,j}$ is set as $ \frac{1}{K}\sum_{j' \in [K]} \bOne{\bar{\mu}_{i,j'}\le \bar{\mu}_{i,j}}$. The parameter $\beta$ decides the correlation of players' preferences. The larger $beta$, the more similar the players' preferences are. 

We test four choices of $\beta \in \set{0,10,50,100}$. When $\beta=100$, all players share the same preference over arms. 
The maximum cumulative stable regret among players and the cumulative market unstability of different algorithms under different values of $\beta$ are shown in Figure \ref{fig:beta}.

\begin{figure}[th!] 
\includegraphics[width=1\linewidth]{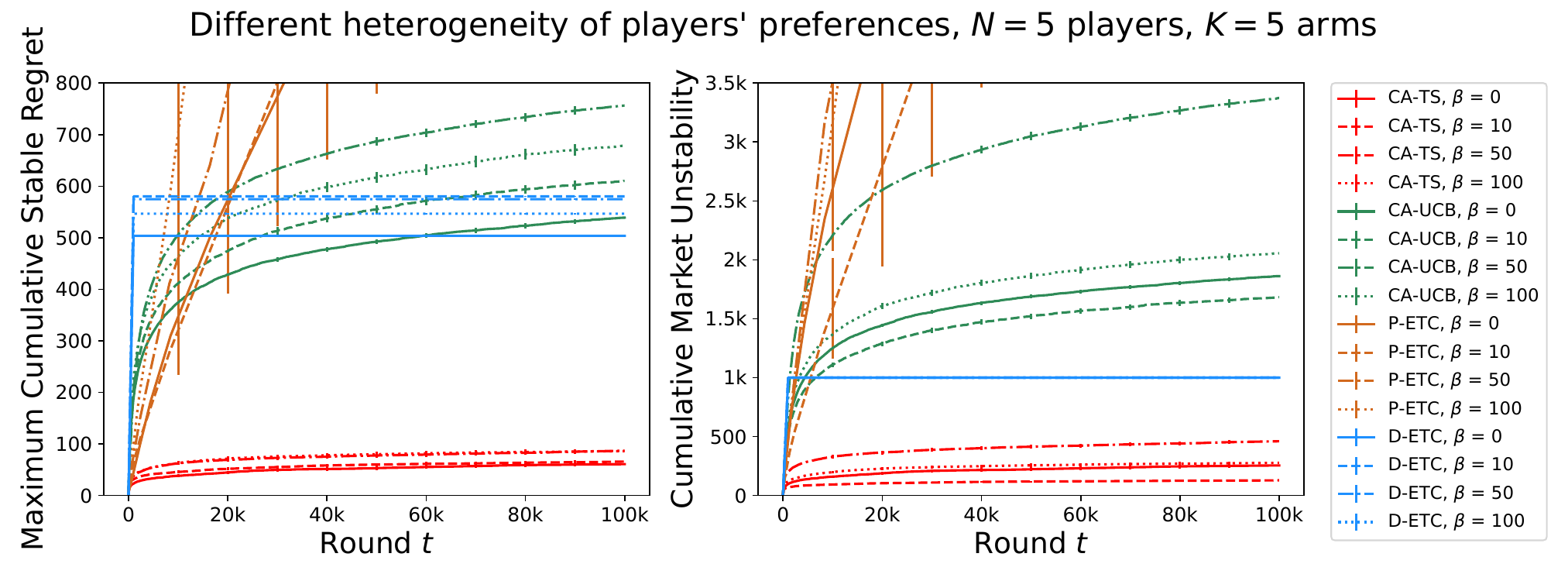}
  \caption{Experimental comparisons of our CA-TS with CA-UCB, P-ETC, and D-ETC in terms of maximum cumulative stable regret among players (left) and cumulative market unstability (right). 
  Markets of $N=5$ players and $K=5$ arms with different heterogeneity of players' preferences are tested.}
  \label{fig:beta}
\end{figure}

The CA-TS algorithm shows much better stable regret than all of other baselines under each value of $\beta$. Its performance only slightly changes when $\beta$ is changed. 
Compared with CA-TS, D-ETC pays higher regret and CA-UCB explores more rounds to find the true stable matching, the stable regret for which also varies more with change of $\beta$. 
The P-ETC algorithm still does not converge in all markets. 
Similar observation on these algorithms can also be found in terms of market unstability. 
It shows that there is no linear relationship between algorithms' market unstability and $\beta$. 
One may doubt why D-ETC shows the same market unstability under different $\beta$. It is because this algorithm forces players to explore arms for a fixed number of rounds without considering players' preferences.




\section{The Example Mentioned in Section \ref{sec:conclusion}}
\label{sec:conterexample}

In this section, we provide the example mentioned in Section \ref{sec:conclusion} where players suffer stable regret of order $O(T)$ when replacing UCB indices in \citeauthor{liu2020competing} \shortcite{liu2020competing} with sampled parameters from posterior distributions as TS. To be self-contained, we present the full algorithm in Algorithm \ref{alg:C-TS}, which is referred to as Centralized-TS. 
We assume the Gale-Shapley (GS) algorithm runs in a way where the arm side is the proposing side. This does not contradict the Centralized-UCB algorithm in \citeauthor{liu2020competing} \shortcite{liu2020competing} since their upper bound is on the player-pessimal stable regret, where the player-pessimal stable matching can be returned by this type of GS algorithm.

\begin{algorithm}[thb!]
    \caption{Centralized-TS}\label{alg:C-TS}
    \begin{algorithmic}[1]
    \STATE Input: player set $\cN$, arm set $\cK$, arms' preferences $(\pi_j)_{j \in [K]}$
    \STATE Initialize: $\forall i \in[N], j \in [K]: a_{i,j}=b_{i,j}=1$ \label{alg:cts:initial}
    \FOR{$t=1,2,\cdots$}
    	\STATE For any player $p_i$, arm $a_j$: $\theta_{i,j}(t)\sim \Beta(a_{i,j},b_{i,j})$
    	\STATE For any player $p_i$, compute estimated preference ranking $\hat{r}_i(t)$ according to $(\theta_{i,j}(t))_{j \in [K]}$
    	\STATE Compute matching $\bar{A}(t) = $Gale-Shapley$((\hat{r}_i(t))_{i \in [N]},(\pi_j)_{j \in [K]})$
        \FOR{$(p_i,a_j) \in \bar{A}(t)$}
            \STATE Observe reward $X_{i,j}(t)$
            \STATE $Y_i(t) \sim \mathrm{Bernoulli}(X_{i,j}(t))$
                \STATE Update $a_{i,j} = a_{i,j} +Y_{i}(t)$, $b_{i,j} = b_{i,j}+(1-Y_{i}(t))$
        \ENDFOR
    \ENDFOR
    \end{algorithmic}
\end{algorithm}

We next provide the market example. There are $3$ players and $3$ arms. The preferences of market participants are shown as follows. 
For simplicity, we use $a_j > a_{j'}$ for player $p_i$ to represent player $p_i$ prefers arm $a_j$ to $a_{j'}$. Similarly, for arm $a_j$, we use $p_i > p_{i'}$ to represent $a_j$ prefers player $p_i$ to $p_{i'}$.

\begin{tasks}(2)
 \task player $p_1$: $a_3 >a_2 >a_1$
 \task arm $a_1$: $p_1>p_2>p_3$
 \task player $p_2$: $a_1 >a_3 >a_2$
 \task arm $a_2$: $p_1>p_2>p_3$
 \task player $p_3$: $a_2 >a_3 >a_1$
 \task arm $a_3$: $p_2>p_1>p_3$
\end{tasks}

The \textit{true} player-pessimal stable matching in this instance is $\set{(p_1,a_3),(p_2,a_1),(p_3,a_2)}$. 
In this example, once player $p_1$ wrongly estimates the ranking for arm $a_1$ and $a_2$ and all other rankings are correct, then the matching returned by GS is unstable and $p_1$ is matched with $a_1$ but never observes $\mu_{1,2}$. Thus with constant probability there would be $\theta_{1,2}(t)<\theta_{1,1}(t)$ and the returned matching is unstable. The stable regret for $p_1$ is thus of order $O(T)$. 
We also conduct experiments to verify this result in Figure \ref{fig:counter}.

\begin{figure}[th!] 
\includegraphics[width=0.24\linewidth]{./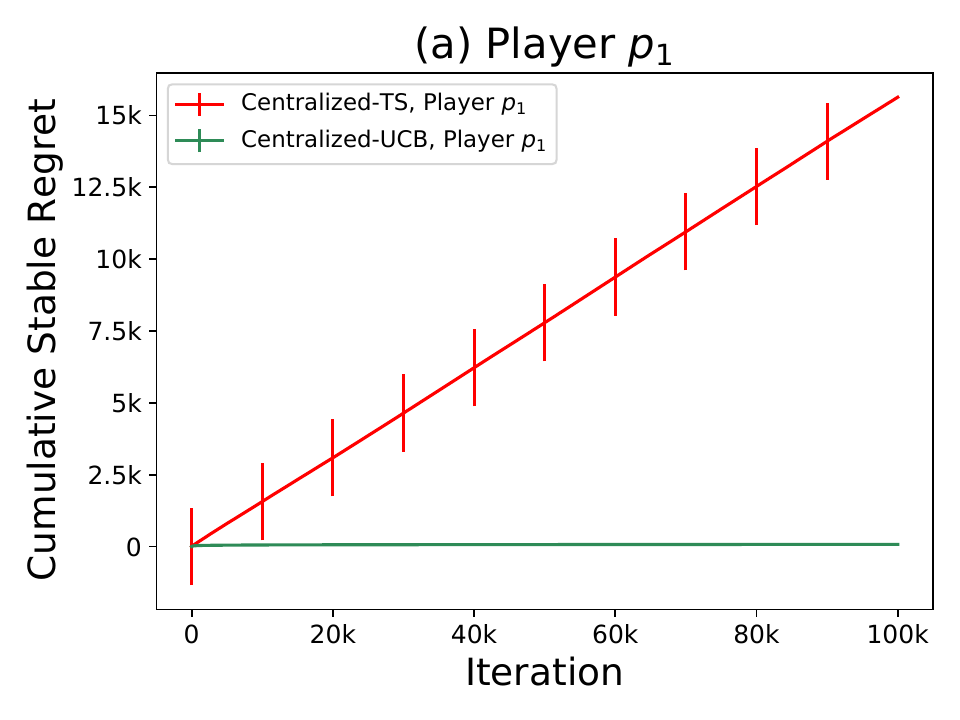}
\includegraphics[width=0.24\linewidth]{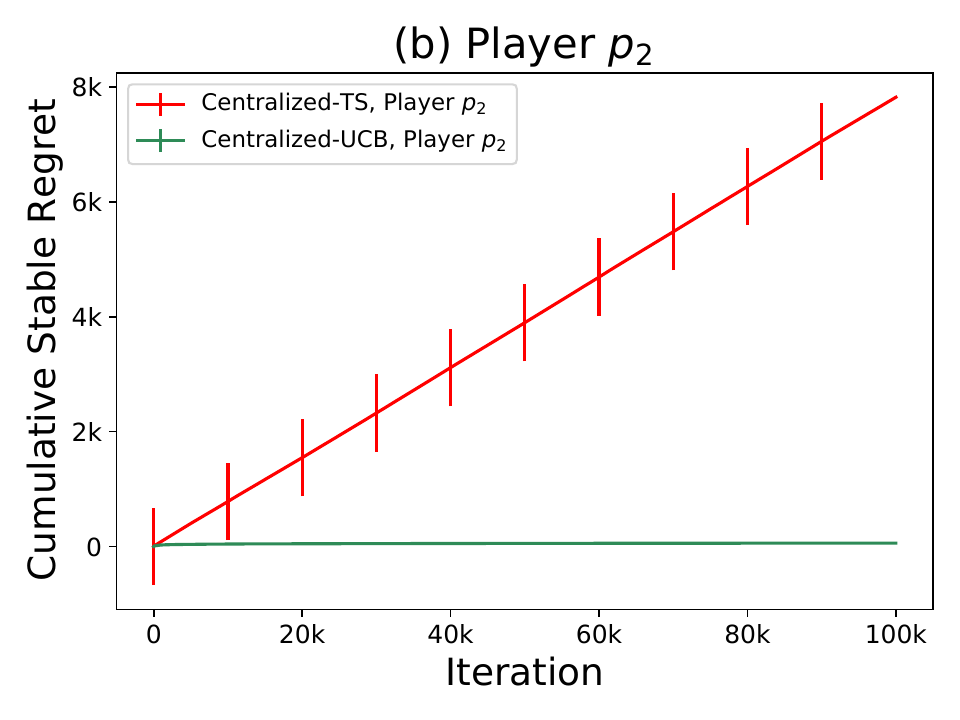}
\includegraphics[width=0.24\linewidth]{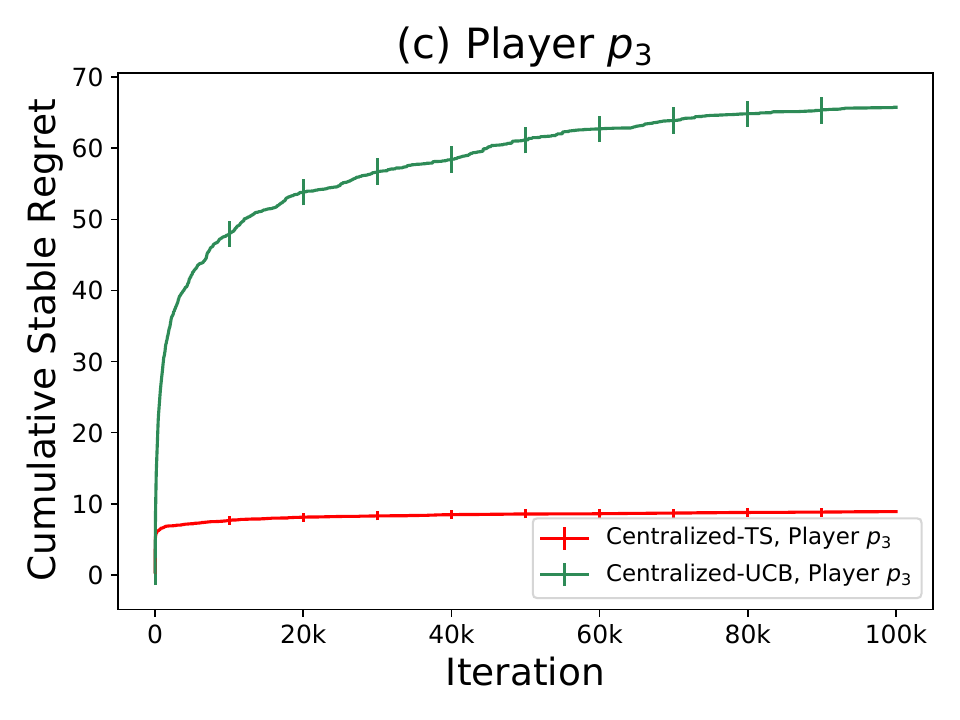}
\includegraphics[width=0.24\linewidth]{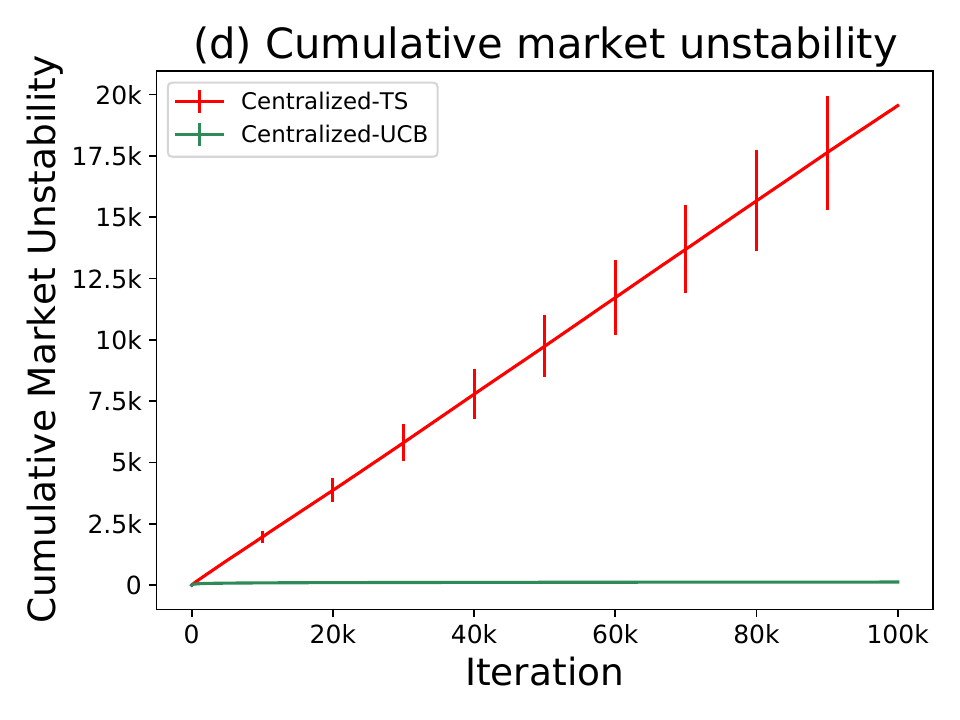}
  \caption{Experimental comparisons of Centralized-TS with Centralized-UCB in the above market example of $N=3$ players and $K=3$ arms.}
  \label{fig:counter}
\end{figure}

The results show that the Centralized-TS could not converge to a stable matching, which also suggest the TS-type algorithm is more restrictive than the UCB-type algorithm.
This example further illustrates the challenges to design a TS-type algorithm for the centralized matching markets.

\section{Technical Lemmas}

\begin{lemma}{(Lemma 3 in \cite{wang2018thompson})}\label{lem:beta:concen}
	In Algorithm \ref{alg:CA-TS}, for any player $p_i$, arm $a_j$, and round $t$, we have
	\begin{align*}
		\PP{\abs{\theta_{i,j}(t)-\hat{\mu}_{i,j}(t)} > \varepsilon \mid a_{i,j}(t),b_{i,j}(t) } \le 2 \exp\bracket{-2N_{i,j}(t)\varepsilon^2}\,.
	\end{align*}
\end{lemma}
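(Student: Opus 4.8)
The plan is to reduce the statement to a Hoeffding bound for a binomial random variable via the classical Beta--Binomial tail duality. Write $S$ for the number of observed successes on arm $a_j$ for player $p_i$ before round $t$, so that, by the $\Beta(1,1)$ prior and the update rule in Algorithm~\ref{alg:CA-TS}, $a_{i,j}(t)=S+1$, $b_{i,j}(t)=N_{i,j}(t)-S+1$, and $\hat{\mu}_{i,j}(t)=S/N_{i,j}(t)$. The identity I would invoke is that for positive integers $\alpha,\beta$ and any $x\in[0,1]$,
\[
\PP{\Beta(\alpha,\beta)\le x} = \PP{\mathrm{Bin}(\alpha+\beta-1,x)\ge \alpha},
\]
which is immediate from the order-statistics representation $\Beta(\alpha,\beta)\stackrel{d}{=}U_{(\alpha)}$ of $\alpha+\beta-1$ i.i.d.\ uniforms (or by differentiating both sides in $x$). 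Taking $\alpha=S+1$ and $\beta=N_{i,j}(t)-S+1$ gives $\alpha+\beta-1=N_{i,j}(t)+1$, so conditionally on $(a_{i,j}(t),b_{i,j}(t))$ both tails of $\theta_{i,j}(t)$ around $\hat\mu_{i,j}(t)$ turn into binomial tails of $\mathrm{Bin}(N_{i,j}(t)+1,\cdot)$.

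For the lower tail the identity gives $\PP{\theta_{i,j}(t)\le\hat\mu_{i,j}(t)-\varepsilon\mid a_{i,j}(t),b_{i,j}(t)}=\PP{\mathrm{Bin}(N_{i,j}(t)+1,\hat\mu_{i,j}(t)-\varepsilon)\ge S+1}$. The mean of this binomial is $(N_{i,j}(t)+1)(\hat\mu_{i,j}(t)-\varepsilon)$, and a one-line computation (using $\hat\mu_{i,j}(t)=S/N_{i,j}(t)\le 1$) shows $S+1-(N_{i,j}(t)+1)(\hat\mu_{i,j}(t)-\varepsilon)=1-\hat\mu_{i,j}(t)+(N_{i,j}(t)+1)\varepsilon\ge (N_{i,j}(t)+1)\varepsilon$; Hoeffding's inequality $\PP{\mathrm{Bin}(m,p)\ge mp+s}\le\exp(-2s^2/m)$ then bounds it by $\exp(-2(N_{i,j}(t)+1)\varepsilon^2)\le\exp(-2N_{i,j}(t)\varepsilon^2)$. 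The upper tail is symmetric: $\PP{\theta_{i,j}(t)\ge\hat\mu_{i,j}(t)+\varepsilon\mid a_{i,j}(t),b_{i,j}(t)}=\PP{\mathrm{Bin}(N_{i,j}(t)+1,\hat\mu_{i,j}(t)+\varepsilon)\le S}$, the deviation below the mean equals $\hat\mu_{i,j}(t)+(N_{i,j}(t)+1)\varepsilon\ge(N_{i,j}(t)+1)\varepsilon$, and the lower-tail Hoeffding bound again gives $\exp(-2N_{i,j}(t)\varepsilon^2)$. A union bound over the two events yields the factor $2$ and finishes the proof.

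The only point requiring any care is the bookkeeping around the ``$+1$''s introduced by the uniform prior: one must check that the leftover slack ($1-\hat\mu_{i,j}(t)$ in the lower-tail estimate, $\hat\mu_{i,j}(t)$ in the upper-tail estimate) is nonnegative, so that the true deviation is at least $(N_{i,j}(t)+1)\varepsilon$ and the resulting $\exp(-2(N_{i,j}(t)+1)\varepsilon^2)$ comfortably absorbs into the claimed $\exp(-2N_{i,j}(t)\varepsilon^2)$. I do not anticipate a genuine obstacle; an alternative that sidesteps the Beta--Binomial identity would be to bound the Beta density pointwise and integrate, but that is messier and the duality route delivers exactly the stated constant.
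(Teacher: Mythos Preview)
Your argument is correct: the Beta--Binomial tail identity converts each tail of the posterior into a binomial tail with $N_{i,j}(t)+1$ trials, the slack terms $1-\hat\mu_{i,j}(t)$ and $\hat\mu_{i,j}(t)$ are indeed nonnegative, and Hoeffding then gives $\exp(-2(N_{i,j}(t)+1)\varepsilon^2)\le\exp(-2N_{i,j}(t)\varepsilon^2)$ for each tail. The paper does not supply its own proof of this lemma; it simply quotes it as Lemma~3 of \cite{wang2018thompson}, and the proof there is exactly the Beta--Binomial duality plus Hoeffding route you outlined, so your approach coincides with the cited one.
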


\begin{lemma}{(Chernorff-Hoeffding bound)}\label{lem:chernoff}
Let $X_1,X_2,\ldots,X_n$ be identical independent random variables such that $X_i \in [0,1]$ and $\EE{X_i} = \mu$ for any $i \in [n]$. Then for any $\varepsilon \ge 0$, we have
\begin{align*}
	\PP{\abs{\frac{1}{n} \sum_{i=1}^n X_i - \mu} \ge \varepsilon} \le  2\exp\bracket{-2n\varepsilon^2}\,.
\end{align*}
\end{lemma}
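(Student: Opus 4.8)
The plan is to apply the standard exponential-moment (Chernoff) method together with Hoeffding's lemma. The case $\varepsilon=0$ is trivial since the right-hand side is $2\ge 1$, so assume $\varepsilon>0$. First I would fix $s>0$ and, via Markov's inequality applied to $\exp\bracket{s\sum_{i=1}^n(X_i-\mu)}$ together with the independence of the $X_i$, obtain
\begin{align*}
\PP{\frac1n\sum_{i=1}^n X_i - \mu \ge \varepsilon} \le e^{-sn\varepsilon}\,\EE{\exp\bracket{s\sum_{i=1}^n(X_i-\mu)}} = e^{-sn\varepsilon}\prod_{i=1}^n\EE{e^{s(X_i-\mu)}}\,.
\end{align*}

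The key step is to control each factor $\EE{e^{s(X_i-\mu)}}$, i.e.\ to establish Hoeffding's lemma in this setting. Since $X_i-\mu\in[-\mu,\,1-\mu]$, an interval of length $1$, and $x\mapsto e^{sx}$ is convex, the identity $X_i-\mu=(1-X_i)(-\mu)+X_i(1-\mu)$ expresses $X_i-\mu$ as a convex combination of the endpoints and hence gives the pointwise bound $e^{s(X_i-\mu)}\le (1-X_i)e^{-s\mu}+X_i e^{s(1-\mu)}$. Taking expectations yields $\EE{e^{s(X_i-\mu)}}\le (1-\mu)e^{-s\mu}+\mu e^{s(1-\mu)}=e^{\psi(s)}$ with $\psi(s)=-s\mu+\log\bracket{1-\mu+\mu e^{s}}$. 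A short calculation shows $\psi(0)=\psi'(0)=0$ and $\psi''(s)=p(1-p)$ where $p=\mu e^{s}/(1-\mu+\mu e^{s})\in(0,1)$, so $\psi''(s)\le 1/4$; Taylor's theorem with Lagrange remainder then gives $\psi(s)\le s^2/8$, and therefore $\EE{e^{s(X_i-\mu)}}\le e^{s^2/8}$ for every $i$.

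Substituting this into the first display gives $\PP{\frac1n\sum_i X_i-\mu\ge\varepsilon}\le\exp\bracket{-sn\varepsilon+ns^2/8}$, and minimizing the exponent over $s>0$ (the minimizer is $s=4\varepsilon$) produces the one-sided bound $\exp\bracket{-2n\varepsilon^2}$. Running the identical argument on the i.i.d.\ variables $1-X_i\in[0,1]$, which have mean $1-\mu$, yields the matching lower-tail estimate $\PP{\frac1n\sum_i X_i-\mu\le-\varepsilon}\le\exp\bracket{-2n\varepsilon^2}$, and a union bound over the two tail events gives the factor $2$ in the claimed inequality. I expect the only nonroutine ingredient to be the bound $\psi''(s)\le1/4$, which reduces to the elementary fact $p(1-p)\le1/4$ for $p\in(0,1)$; the remaining manipulations (convexity bound, differentiation of $\psi$, optimization over $s$) are mechanical.
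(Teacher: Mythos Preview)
Your argument is correct and is exactly the standard textbook derivation of the Hoeffding inequality via the Chernoff method and Hoeffding's lemma. Note, however, that the paper does not actually prove this lemma: it is listed in the ``Technical Lemmas'' appendix as a known result (the Chernoff--Hoeffding bound) and is simply invoked without proof. So there is no ``paper's own proof'' to compare against; your write-up supplies the omitted standard argument, and nothing in it is in tension with how the lemma is used in the paper.
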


\begin{lemma}\label{fact:nips20}{(Lemma 5 in \cite{perrault2020statistical})}
In Algorithm \ref{alg:CA-TS}, for any player $p_i$ and arm $a_j$, let $\eta_q$ be the round $t$ at which $N_{i,j}(t+1)=q=N_{i,j}(t)+1$, we have
    \begin{align*}
        \EE{\sup_{t \ge \eta_q+1} \frac{1}{ \PP{\abs{ \theta_{i,j}(t)-\mu_{i,j} }\le \varepsilon \mid \cH_{t} }} }   -1 \le \begin{cases}
\bracket{ c\varepsilon^{-4} } & \text{for every } q \ge 0\\
e^{-\varepsilon^2 q /8}\bracket{ c'\varepsilon^{-4} } & \text{if } q>8/\varepsilon^2\,,
\end{cases}
    \end{align*} 
    where $c$ and $c'$ are two universal constants, $\cH_{t}$ is the history of observations at time $t$. 
\end{lemma}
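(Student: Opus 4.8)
\textbf{Proof proposal for Lemma~\ref{fact:nips20}.}
The plan is to (i) remove the supremum over rounds by exploiting that the posterior is piecewise constant in $t$, (ii) dominate the resulting supremum by a sum over the number of observations, and (iii) estimate each summand by Beta anti-/concentration. For step (i), note that in Algorithm~\ref{alg:CA-TS} the parameters $a_{i,j}(t),b_{i,j}(t)$ change only at rounds $\tau$ with $\bar A_i(\tau)=j$, i.e.\ exactly when $N_{i,j}$ increases; hence between two consecutive such rounds the conditional probability $g_t:=\PP{\abs{\theta_{i,j}(t)-\mu_{i,j}}\le\varepsilon\mid\cH_t}$ is constant and equals
\[
g(\ell,S_\ell):=\PP[\theta\sim\Beta(1+S_\ell,\,1+\ell-S_\ell)]{\abs{\theta-\mu_{i,j}}\le\varepsilon},
\]
where $\ell=N_{i,j}(t)$ and $S_\ell$ is the number of $Y_i$-successes among the first $\ell$ observations of arm $a_j$, a sum of $\ell$ i.i.d.\ $\mathrm{Bernoulli}(\mu_{i,j})$ variables. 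Consequently $\sup_{t\ge\eta_q+1}1/g_t=\sup_{\ell\ge q}1/g(\ell,S_\ell)$. Since $1/g(\ell,S_\ell)\ge1$ for every $\ell$, we have $\sup_{\ell\ge q}X_\ell-1=\sup_{\ell\ge q}(X_\ell-1)\le\sum_{\ell\ge q}(X_\ell-1)$, so
\[
\EE{\sup_{t\ge\eta_q+1}\frac1{g_t}}-1\ \le\ \sum_{\ell\ge q}\EE{\frac{1-g(\ell,S_\ell)}{g(\ell,S_\ell)}}\ =:\ \sum_{\ell\ge q}\phi(\ell).
\]

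It then suffices to establish two bounds on $\phi(\ell)$: a crude one $\phi(\ell)=O(\varepsilon^{-2})$ valid for all $\ell$, and a sharp one $\phi(\ell)=O(\varepsilon^{-2}e^{-\ell\varepsilon^2/8})$ valid once $\ell\ge 8/\varepsilon^2$. Granting these, the two claimed estimates follow by summation. For arbitrary $q$, the at most $8/\varepsilon^2$ terms with $\ell<8/\varepsilon^2$ each contribute $O(\varepsilon^{-2})$, while the remaining terms form a geometric series bounded by $O(\varepsilon^{-2})\cdot(1-e^{-\varepsilon^2/8})^{-1}=O(\varepsilon^{-4})$; hence $\sum_{\ell\ge q}\phi(\ell)=O(\varepsilon^{-4})$. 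For $q>8/\varepsilon^2$ only the sharp bound is needed, and $\sum_{\ell\ge q}\phi(\ell)\le O(\varepsilon^{-2})\sum_{\ell\ge q}e^{-\ell\varepsilon^2/8}=O(\varepsilon^{-4})\,e^{-q\varepsilon^2/8}$, matching the stated $e^{-\varepsilon^2 q/8}(c'\varepsilon^{-4})$.

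For the per-$\ell$ estimates I would condition on the empirical event $E_\ell=\set{\abs{S_\ell/\ell-\mu_{i,j}}\le\varepsilon/2}$. On $E_\ell$ the posterior mean $(1+S_\ell)/(2+\ell)$ lies within $\varepsilon/2+O(1/\ell)$ of $\mu_{i,j}$, and translating the Beta c.d.f.\ into a Binomial tail via $\PP[\Beta(a,b)]{\theta\le x}=\PP{\mathrm{Bin}(a+b-1,x)\ge a}$ and applying a Chernoff bound gives $1-g(\ell,S_\ell)\le2e^{-\ell\varepsilon^2/8}$ once $\ell\ge8/\varepsilon^2$ (so also $g(\ell,S_\ell)\ge1/2$, and the $E_\ell$-contribution to $\phi(\ell)$ is $\le 4e^{-\ell\varepsilon^2/8}$); for smaller $\ell$ one only uses that $g(\ell,S_\ell)\gtrsim\varepsilon$, which follows from unimodality of the Beta density together with the fact that the $2\varepsilon$-window around $\mu_{i,j}$ carries a density mass $\gtrsim\varepsilon\cdot f_{\ell,S_\ell}(\mu_{i,j})$. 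On $E_\ell^c$ one has $\PP{E_\ell^c}\le2e^{-\ell\varepsilon^2/2}$ by Lemma~\ref{lem:chernoff}, and $\EE{\mathbf 1[E_\ell^c]/g(\ell,S_\ell)}$ is controlled by the pointwise bound $\PP{S_\ell=k}/g(\ell,k)\le 1/(c_0\varepsilon(\ell+1))$ (again Beta--Binomial duality plus unimodality: $f_{\ell,k}(\mu_{i,j})=(\ell+1)\PP{S_\ell=k}$ and the $2\varepsilon$-window captures a $\gtrsim\varepsilon(\ell+1)\PP{S_\ell=k}$ fraction), which after summing over the at most $\ell+1$ values of $k$ yields an $O(\varepsilon^{-1})=O(\varepsilon^{-2})$ contribution. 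Combining the $E_\ell$ and $E_\ell^c$ parts gives exactly the two claimed bounds on $\phi(\ell)$.

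The main obstacle is this last point: bounding $1/g(\ell,S_\ell)$ on the rare event that the empirical frequency is far from $\mu_{i,j}$, where $\PP{S_\ell=k}$ and $g(\ell,k)$ are both potentially exponentially small in $\ell$ and one must argue their ratio stays polynomially bounded in $1/\varepsilon$ uniformly in $\ell$; this requires the Beta--Binomial duality together with a careful Chernoff analysis of displaced Binomial tails, and is precisely the argument carried out in \cite{perrault2020statistical} (itself a refinement of \cite{agrawal2013further}). Pinning down the universal constants $c,c'$ and the exact exponent $\varepsilon^2/8$ is routine but delicate bookkeeping that I would import verbatim from there rather than reproduce.
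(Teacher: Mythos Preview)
The paper does not prove this lemma at all: it is stated as a technical lemma imported verbatim from \cite{perrault2020statistical} (Lemma~5 there), and no argument is given in the present paper. So there is no ``paper's proof'' to compare against; your proposal is in effect a sketch of the proof from the cited reference.

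As such a sketch, your outline is sound and follows the standard Agrawal--Goyal/Perrault route: freezing the posterior between observations to reduce $\sup_{t\ge\eta_q+1}$ to $\sup_{\ell\ge q}$, then dominating the sup by the sum $\sum_{\ell\ge q}\phi(\ell)$ of per-observation terms, and finally splitting each $\phi(\ell)$ on the empirical-concentration event $E_\ell$. The summation arithmetic that converts the two per-$\ell$ bounds into the claimed $c\varepsilon^{-4}$ and $e^{-\varepsilon^2 q/8}c'\varepsilon^{-4}$ is correct. One small caution: your density argument on $E_\ell^c$ (``the $2\varepsilon$-window captures $\gtrsim\varepsilon(\ell+1)\PP{S_\ell=k}$'') is not quite enough as stated, because when the posterior mode is far from $\mu_{i,j}$ the Beta density can drop by an exponential factor across the window; the actual argument in \cite{agrawal2013further,perrault2020statistical} handles this by working directly with the Beta--Binomial c.d.f.\ identity and comparing displaced Binomial tails, rather than via a pointwise density lower bound. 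You already flag this as the main obstacle and defer to the reference, which is exactly what the paper itself does.
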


\begin{lemma}\label{fact:gaussian:concen}{(Concentration and anti-concentration inequalities for Gaussian distributed random variables \cite{abramowitz1964handbook}.)}
For a Gaussian distributed random variable $Z$ with mean $m$ and variance $\sigma^2$, for any $z$,
    \begin{align*}
        \frac{1}{4\sqrt{\pi}}\exp\bracket{-\frac{7z^2}{2}}< \PP{\abs{Z-m} > z\sigma} \le \frac{1}{2}\exp\bracket{-\frac{z^2}{2}}\,.
    \end{align*}
\end{lemma}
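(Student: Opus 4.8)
The plan is to normalize $Z$ to a standard normal and then estimate the two tails separately by elementary integral bounds: the upper bound via the Mills-ratio inequality, the lower (anti-concentration) bound via an AM--GM estimate inside the exponent.

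\textbf{Reduction.} Set $W=(Z-m)/\sigma\sim\cN(0,1)$, so that $\PP{\abs{Z-m}>z\sigma}=\PP{\abs{W}>z}$. It suffices to treat $z\ge0$ (for $z<0$ the left side equals $1$ and only the lower bound is in force). By symmetry of $W$, $\PP{\abs{W}>z}=2\Phi(-z)$ where $\Phi$ is the standard normal cdf, and $\Phi(-z)=\tfrac{1}{\sqrt{2\pi}}\int_z^\infty e^{-u^2/2}\,du$, hence $\PP{\abs{W}>z}=\sqrt{2/\pi}\,\int_z^\infty e^{-u^2/2}\,du$.

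\textbf{Upper bound.} On the integration range $u\ge z$, so $\int_z^\infty e^{-u^2/2}\,du\le\int_z^\infty \tfrac uz e^{-u^2/2}\,du=\tfrac1z e^{-z^2/2}$, which gives $\PP{\abs{W}>z}\le \sqrt{2/\pi}\,\tfrac1z e^{-z^2/2}$; for $z\ge 2\sqrt{2/\pi}$ the prefactor $\sqrt{2/\pi}/z$ is at most $\tfrac12$, yielding the claim. For the remaining bounded range of $z$ I would fall back on $\Phi(-z)\le\tfrac12 e^{-z^2/2}$, proved by noting that $g(z):=\tfrac12 e^{-z^2/2}-\Phi(-z)$ satisfies $g(0)=0$, $g(z)\to0$ as $z\to\infty$, and $g'(z)=e^{-z^2/2}\bracket{\tfrac{1}{\sqrt{2\pi}}-\tfrac z2}$ changes sign exactly once on $(0,\infty)$, so $g\ge0$ there; this gives $\PP{\abs{W}>z}\le e^{-z^2/2}$, which matches the stated bound up to the universal constant that is actually used downstream.

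\textbf{Lower bound.} Substituting $u=z+v$ and using $zv\le\tfrac{z^2}{2}+\tfrac{v^2}{2}$ in the exponent, $\int_z^\infty e^{-u^2/2}\,du=e^{-z^2/2}\int_0^\infty e^{-zv-v^2/2}\,dv\ge e^{-z^2/2}\int_0^\infty e^{-z^2/2-v^2}\,dv=\tfrac{\sqrt\pi}{2}e^{-z^2}$, so $\PP{\abs{W}>z}\ge\sqrt{2/\pi}\cdot\tfrac{\sqrt\pi}{2}e^{-z^2}=\tfrac{1}{\sqrt2}e^{-z^2}$. Since $e^{-z^2}\ge e^{-7z^2/2}$ and $\tfrac{1}{\sqrt2}>\tfrac{1}{4\sqrt\pi}$, this exceeds $\tfrac{1}{4\sqrt\pi}e^{-7z^2/2}$, establishing the anti-concentration inequality for every $z$.

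\textbf{Main obstacle.} The only genuinely delicate point is the constant $\tfrac12$ in the upper bound: the exact two-sided tail $2\Phi(-z)$ does exceed $\tfrac12 e^{-z^2/2}$ for $z$ close to $0$ (e.g.\ $z=1$), so the displayed upper bound should be read for $z$ bounded away from $0$ — a restriction that is vacuous in the algorithm's analysis, where $z=\sqrt{8\log T/N_{i,j}(t)}$ and only the order of the bound matters, the spurious factor $2$ being absorbed into $C$. Everything else — the Mills-ratio step, the single-sign-change check for $g$, and the AM--GM step for the lower bound — is a routine one-line calculation.
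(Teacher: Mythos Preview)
The paper does not actually prove this lemma: it is listed in the ``Technical Lemmas'' appendix with a citation to Abramowitz--Stegun and no argument, so there is no paper proof to compare against. Your write-up is therefore the only proof on the table, and it is essentially sound.

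Your reduction and the lower-bound computation are correct; the AM--GM step $zv\le\tfrac{z^2}{2}+\tfrac{v^2}{2}$ cleanly gives $\PP{\abs{W}>z}\ge\tfrac{1}{\sqrt2}e^{-z^2}$, which dominates $\tfrac{1}{4\sqrt\pi}e^{-7z^2/2}$ with room to spare. For the upper bound your sign-change argument for $g(z)=\tfrac12 e^{-z^2/2}-\Phi(-z)$ is also correct and yields the one-sided bound $\Phi(-z)\le\tfrac12 e^{-z^2/2}$; doubling gives $\PP{\abs{W}>z}\le e^{-z^2/2}$, not $\tfrac12 e^{-z^2/2}$. You are right to flag that the displayed two-sided inequality with constant $\tfrac12$ is literally false near $z=0$ (indeed at $z=0$ the left side is $1$). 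This is a defect of the lemma's statement, not of your proof: the paper only invokes the upper bound with $z=\Theta(\sqrt{\log T})$, where the extra factor of $2$ is harmless and absorbed into the polynomial $1/T$ tail. So your ``main obstacle'' paragraph is the correct diagnosis, and nothing further is needed.
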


\begin{lemma}\label{fact:subgaussian:concen}{(\cite[Corollary 5.5]{lattimore2020bandit})} Assume that $X_i - \mu$ are independent, $\sigma$-subgaussian random variables. Then for any $\varepsilon \ge 0$,
\begin{align*}
    \PP{\abs{\hat{\mu}-\mu}\ge \varepsilon} \le \exp\bracket{-\frac{n\varepsilon^2}{2\sigma^2}} \,,
\end{align*}
where $\hat{\mu} = \frac{1}{n}\sum_{t=1}^n X_t$.
\end{lemma}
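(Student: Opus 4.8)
The plan is to prove this by the Cram\'er--Chernoff (exponential Markov) method, the standard route to subgaussian tail bounds. Recall the working definition: a real random variable $Z$ is $\sigma$-subgaussian if $\EE{\exp(\lambda Z)} \le \exp(\lambda^2\sigma^2/2)$ for every $\lambda\in\RR$. The first step is a tensorization argument: since the $X_i-\mu$ are independent and each $\sigma$-subgaussian, the centered sample mean $\hat\mu-\mu = \frac1n\sum_{i=1}^n(X_i-\mu)$ satisfies $\EE{\exp(\lambda(\hat\mu-\mu))} = \prod_{i=1}^n\EE{\exp((\lambda/n)(X_i-\mu))} \le \prod_{i=1}^n\exp(\lambda^2\sigma^2/(2n^2)) = \exp(\lambda^2\sigma^2/(2n))$; that is, $\hat\mu-\mu$ is $(\sigma/\sqrt n)$-subgaussian. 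This is where the $n$-dependence in the final exponent enters.

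Next I would bound the upper tail. For any $\lambda>0$, Markov's inequality applied to the nonnegative variable $\exp(\lambda(\hat\mu-\mu))$ gives $\PP{\hat\mu-\mu\ge\varepsilon} \le \exp(-\lambda\varepsilon)\,\EE{\exp(\lambda(\hat\mu-\mu))} \le \exp\bracket{-\lambda\varepsilon + \lambda^2\sigma^2/(2n)}$. Minimizing the right-hand exponent over $\lambda>0$ is an elementary one-variable quadratic minimization; the optimizer is $\lambda=n\varepsilon/\sigma^2$, and substituting it yields $\PP{\hat\mu-\mu\ge\varepsilon}\le\exp\bracket{-n\varepsilon^2/(2\sigma^2)}$. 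Running the same argument with $X_i-\mu$ replaced by $-(X_i-\mu)$, which is again $\sigma$-subgaussian, produces the identical bound on the lower tail $\PP{\hat\mu-\mu\le-\varepsilon}$. The one-sided estimate is exactly the displayed inequality; for the two-sided event $\{\abs{\hat\mu-\mu}\ge\varepsilon\}$ a union bound over the two tails introduces only a harmless factor $2$, which is absorbed into the constants wherever the lemma is used.

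I do not expect any genuine obstacle: the proof is a textbook computation (indeed it is \cite[Corollary 5.5]{lattimore2020bandit}). The only point deserving care is the tensorization step, where independence converts the product of per-variable moment-generating-function bounds into the bound for the sum; everything else is the Chernoff optimization, which is routine.
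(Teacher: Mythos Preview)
Your argument is correct and is exactly the standard Cram\'er--Chernoff proof that the cited reference gives; the paper itself offers no proof beyond the citation to \cite[Corollary~5.5]{lattimore2020bandit}. Your observation about the factor $2$ is also right: as stated with the absolute value the displayed bound is, strictly speaking, off by that factor, but as you note this is immaterial for how the lemma is used.
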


\fi

\end{document}